\newtheorem{condition}{Condition}[section]
\newtheorem{theorem}{Theorem}[section]
\newtheorem{proposition}{Proposition}[section]
\newtheorem{lemma}[theorem]{Lemma}
\providecommand{\tabularnewline}{\\}
\date{}
\algnewcommand\algorithmicinput{\textbf{Input:}}
\algnewcommand\INPUT{\item[\algorithmicinput]}
\algnewcommand\algorithmicpreproc{\textbf{Pre-processing}}
\algnewcommand\PREPROCESSING{\item[\algorithmicpreproc]}
\algnewcommand\algorithmicmonitoring{\textbf{Real-time monitoring}}
\algnewcommand\MONITORING{\item[\algorithmicmonitoring]}
\algnewcommand{\Initialize}[1]{%
  \State \textbf{Initialize:}
  \Statex \hspace*{\algorithmicindent}\parbox[t]{.8\linewidth}{\raggedright #1}
}
\algnewcommand\algorithmicoutput{\textbf{Output}}
\algnewcommand\OUTPUT{\item[\algorithmicoutput]}
\algnewcommand{\Output}[1]{%
  \State \textbf{Output:}
  \Statex \hspace*{\algorithmicindent}\parbox[t]{.8\linewidth}{\raggedright #1}
}
\newif\ifworkinprogress
  \newcommand{\se}[1]{\textcolor{blue}{\textbf{[Samaneh] #1}}}
  \newcommand{\crn}[1]{\textcolor{magenta}{\textbf{[Chitta] #1}}}
  \newcommand{\kp}[1]{\textcolor{brown}{\textbf{[Kamran] #1}}}
  \newcommand{\se}[1]{}
  \newcommand{\crn}[1]{}
  \newcommand{\kp}[1]{}
\begin{document}
\selectlanguage{english}

\title{Large Multistream Data Analytics for Monitoring and Diagnostics in Manufacturing Systems}

\author {Samaneh Ebrahimi} 
\author {Chitta Ranjan} 
\author {Kamran Paynabar}

\affil {H. Milton Stewart School of Industrial and Systems Engineering,\\
          Georgia Institute of Technology, Atlanta, GA, USA}


\maketitle

\begin{abstract}
\begin{singlespace}
The high-dimensionality and volume of large scale multistream data has inhibited significant research progress in developing an \emph{integrated} monitoring and diagnostics (M\&D) approach. This data, also categorized as big data, is becoming common in manufacturing plants.
In this paper, we propose an integrated M\&D approach for large scale streaming data. We developed a novel monitoring method named Adaptive
Principal Component monitoring (APC) which adaptively chooses PCs
that are most likely to vary due to the change for early detection.
Importantly, we integrate a novel diagnostic approach, Principal Component Signal Recovery (PCSR), to enable a streamlined SPC. This diagnostics approach draws inspiration from
Compressed Sensing and uses Adaptive Lasso for identifying the sparse
change in the process. We theoretically motivate our approaches and
do a performance evaluation of our integrated M\&D method through simulations
and case studies.
\end{singlespace}
\end{abstract}

\section{Introduction \label{sec:Introduction}}

Recently, the problem of process monitoring and diagnosis using a large scale multi-stream data has become an active research area in statistical process control (SPC). The reason is two-fold: first, sensing technologies have enabled fast measurement of a large number of process variables, resulting in large data streams, and, second, conventional multivariate methods such
as Hotelling's $T^{2}$, MEWMA, and MCUSUM (~\cite{wierda1994multivariate,sparks1992quality})
are not scalable in terms of the computational time and the
detection power. 

An example of large streams can be found in gas turbine
systems used for power generation. In these systems, the performance
of the confined combustion process is being monitored using hundreds of
sensors measuring temperature, vibration, pressure, etc., in different
chambers and segments of the turbine. Early detection of any changes
in the system, followed by the diagnosis of the faulty variables is
necessary to avoid imminent blowout that leads to relighting the combustor and costly shutdowns. 

Another application of large streaming data is in image-based process monitoring in which each pixel of an image
can be considered as a single data stream. For example, in a rolling
process where a set of rollers are used to reduce the cross-section
of a long steel bar by applying compressive forces, the quality of
produced bars can be inspected by a vision system that is set up to
take images of the bar surface at short time intervals. A sample of
such an image is shown in Figure~\ref{fig:rolling}. In this image, each row contains 512 pixels, and each pixel can be considered as a variable resulting in an high-dimensional correlated data stream.

\begin{figure}[]
\centering{}
\includegraphics[width=0.4\linewidth]{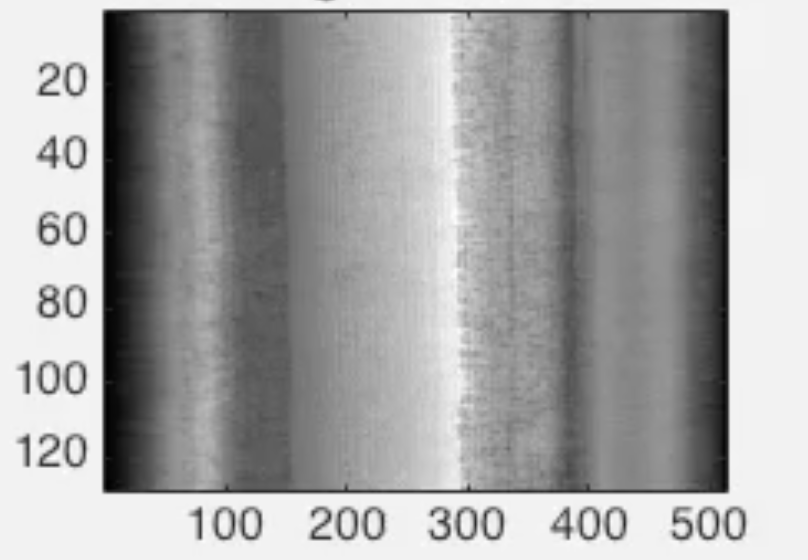}
\caption{Surface image of steel bar in the rolling process }
\label{fig:rolling}
\end{figure}

Despite its importance, existing SPC literature lacks a scalable integrated
M\&D approach using large data streams. In the following, we will discuss the
existing work on \emph{monitoring} and \emph{diagnostics}, their shortcomings---especially 
their lack of integrability---, and motivate our approach.

\subsubsection*{Monitoring}

Conventional multivariate monitoring charts are effective on small or moderate data streams. However, their performance deteriorates as the number of data streams increases. 
To address the high-dimensionality issue, more recent works have focused
on employing variable selection techniques to reduce the dimensionality
by removing the variables that are less susceptible to the process
change. Examples of the variable-selection-based method include \cite{wang2009high,zou2009multivariate,capizzi2011least}.
However, these methods are not scalable and generally require intensive
 computation as the dimension grows. Moreover, most of
these methods are difficult to interpret for a process
engineer. 

There is a section of scalable multivariate monitoring methods that are
developed based on the assumption that data streams are independent
and the change is sparse (only a small subset of variables is affected by a process change). 
For example, \cite{tartakovsky2006detection}
assumed that exactly one variable changes at a time and
proposed an approach based on the maximum of CUSUM statistics from
each individual data stream. \cite{mei2010efficient,mei2011quickest}
developed robust monitoring scheme based on the sum of (the top-r)
local CUSUM statistics assuming all variables are independent and
measurable. 

For the case that variables are not easily or efficiently
measurable, \cite{liu2015adaptive} presented TRAS (top-r based adaptive
sampling), which is an adaptive sampling strategy that uses the sum
of top \textit{r} local CUSUM statistics for monitoring. The sparsity
assumption is generally valid in practice as a change or fault often
affects only a small subset of variables. However, although theoretically
and computationally appealing, the independence assumption is typically
unrealistic.

To address the dependency and high-dimensionality issues, Principal
Component Analysis (PCA) has been widely used for monitoring multivariate
data streams. PCA is a well-known projection technique that transforms
dependent data to uncorrelated features known as Principal Component
(PC) scores. Often, only a few PC scores that explain the most variation
of original data are used for monitoring in a dimension
reduction (~\cite{jackson1979control,wise1990theoretical,qahtan2015pca,li2000recursive,li2014new}).
However, monitoring top PCs with the highest variance may not always be a right approach.

As an example, consider a bivariate normal distribution given in Figure~\ref{fig:PCA_2D}, 
in which $PC_1$ represents the direction of the eigenvector
with larger eigenvalue, and the red arrow indicates the direction
of change in the mean of the distribution. As can be seen from the
figure, the effect of the change on both PC-scores is the same. However,
the fact that PC1 constitutes the most of the process variance makes
it less sensitive to the change compared with PC2. In other words,
small changes may be masked by the variation present in top PC scores,
hence becoming undetectable. 

\begin{figure}[]
\centering{}
\includegraphics[width=0.8\linewidth]{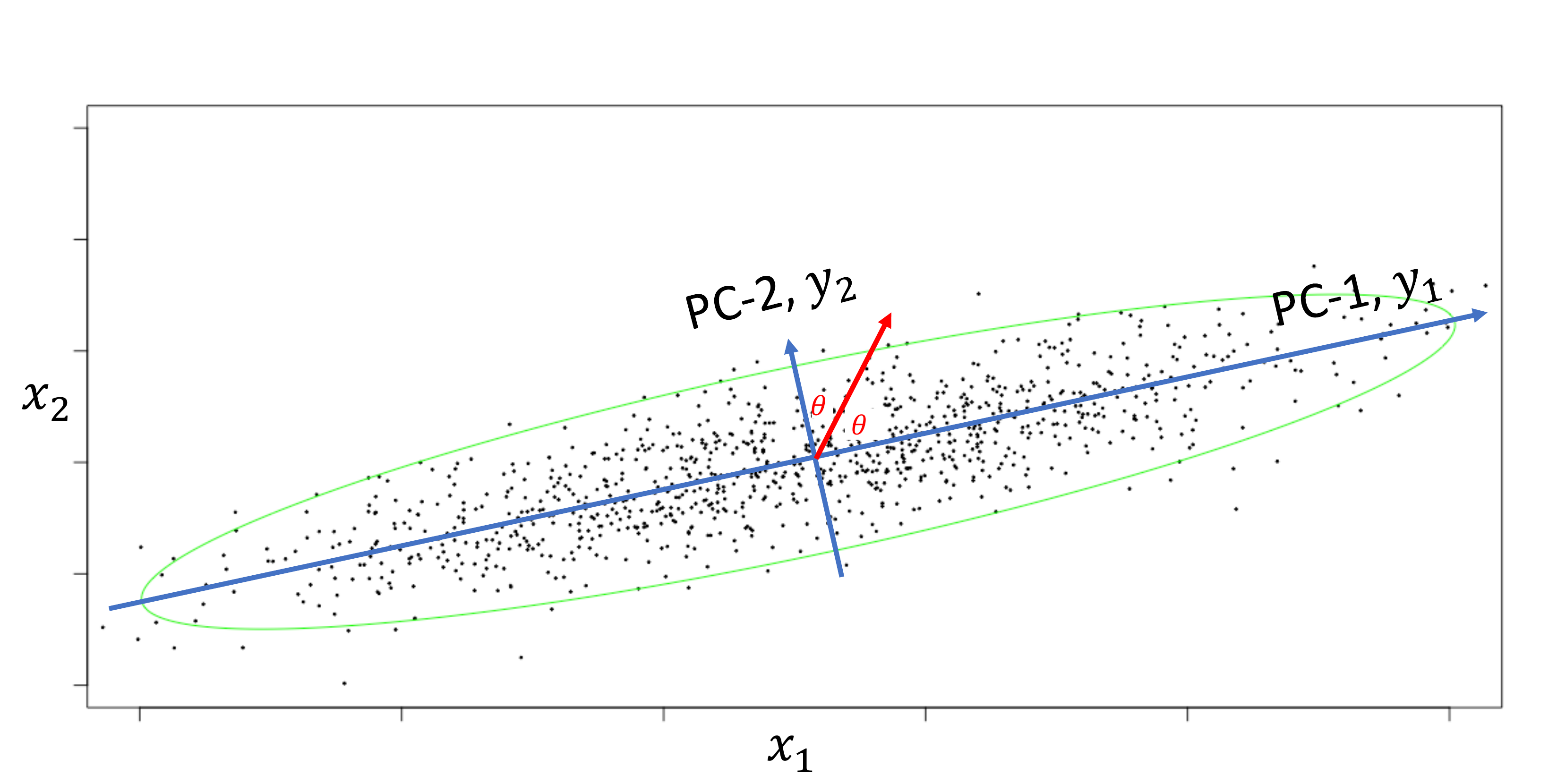}
\caption{Example of a change having the same angle with both PCs }
\label{fig:PCA_2D}
\end{figure}

Other PC selection criteria for process
monitoring include the variance of reconstruction error (VRE) approach
by \cite{dunia1998subspace}, and the fault signal to noise ratio
(SNR) by \cite{yuan2009improved} and \cite{tamura2007study}. The
VRE method selects a subset of PCs which minimizes fault reconstruction
error, while the fault SNR method is based on fault detection sensitivity.
The main disadvantage of these methods is that they require prior knowledge of the fault direction. 

In this paper, we present an adaptive
PC Selection (APC) approach based on hard-thresholding for selecting
the set of PC scores that are most susceptible to an unknown change.
Unlike the top-r-PCs, in our approach the number of features
may vary at each sampling time. Also the PCs are adaptively selected based on the observed sample and its standardized distance from the in-control mean. 
Additionally, the proposed APC approach does not require any prior knowledge
about a fault or change direction, which makes it more universally
applicable.

\subsubsection*{Diagnostics}

Another long-standing issue with PCA-based monitoring methods is the
lack of diagnosability. This is because that the PC scores used as
monitoring features are linear combinations of original measurements. Therefore, if a PC score initiates an out-of-control alarm, it is difficult to attribute it to any specific process variable(s). Interpretation and decomposition of these additive statistics are often theoretically difficult and/or computationally expensive
in high-dimensional data streams. 

For diagnostics on a PC-based monitoring, one
common approach is the use of contribution plots that specifies the
contribution of each variable to the out of control statistic ~\cite{westerhuis2000generalized,alcala2009reconstruction,joe2003statistical,qin2001unifying}.
Contribution plots are popular because of their ease of implementation
and their ability to work without any a priori knowledge. However,
correct isolation with contribution plots is not guaranteed for multiple
sensor faults ~\cite{yue2001reconstruction}. 

To overcome this problem, hierarchical contribution plots was proposed ~\cite{macgregor1994process}.
However, it will perform poorly if the initial partitioning is not correct.
Moreover, in the context of high-dimensional data, these methods become difficult
to interpret and are also computationally expensive.

For the purpose of diagnosis in multivariate control charts with original
measurements, \cite{wang2009high} and ~\cite{zou2009multivariate}
proposed variable selection techniques. Both methods optimize a penalized
likelihood function for multivariate normal observations to identify
the subset of altered variables. The $\mathcal{L}_1$-penalized regression
method of \cite{zou2009multivariate} provides more computational
advantages in implementation. \cite{zou2011lasso} combined Bayesian
Information Criterion (BIC) with penalization techniques to assist
the fault localization process and suggested an Adaptive Lasso-based
diagnostic procedure. However, these methods assume that the change
point is already known and focus only on diagnosis. 

Additionally,
they cannot easily be integrated with a PCA-based monitoring approach.
To address these shortcomings, we propose a new diagnostics approach
that seamlessly integrates with our proposed PCA-based monitoring
method. The developed approach draws inspiration from Compressed Sensing and uses Adaptive Lasso to identify the shifted variables.
In this paper, we focus on detecting mean shifts and we assume
the shift is sparse. As mentioned earlier, this is a reasonable assumption
because in real-world usually only a small number of variables change.

The major contributions in this paper are, a. countering the traditional
view of top-PC scores as the best for process monitoring, and
proposing an adaptive PC selection approach as an alternative; and
b) proposing a new diagnostics approach that integrates with the proposed
PCA-based monitoring framework. 
An overview of the proposed Monitoring and Diagnostics (M\&D) approach
is shown in Figure~\ref{fig:Methodology-overview}.

\begin{figure}[]
\centering{}
\includegraphics[width=0.6\linewidth]{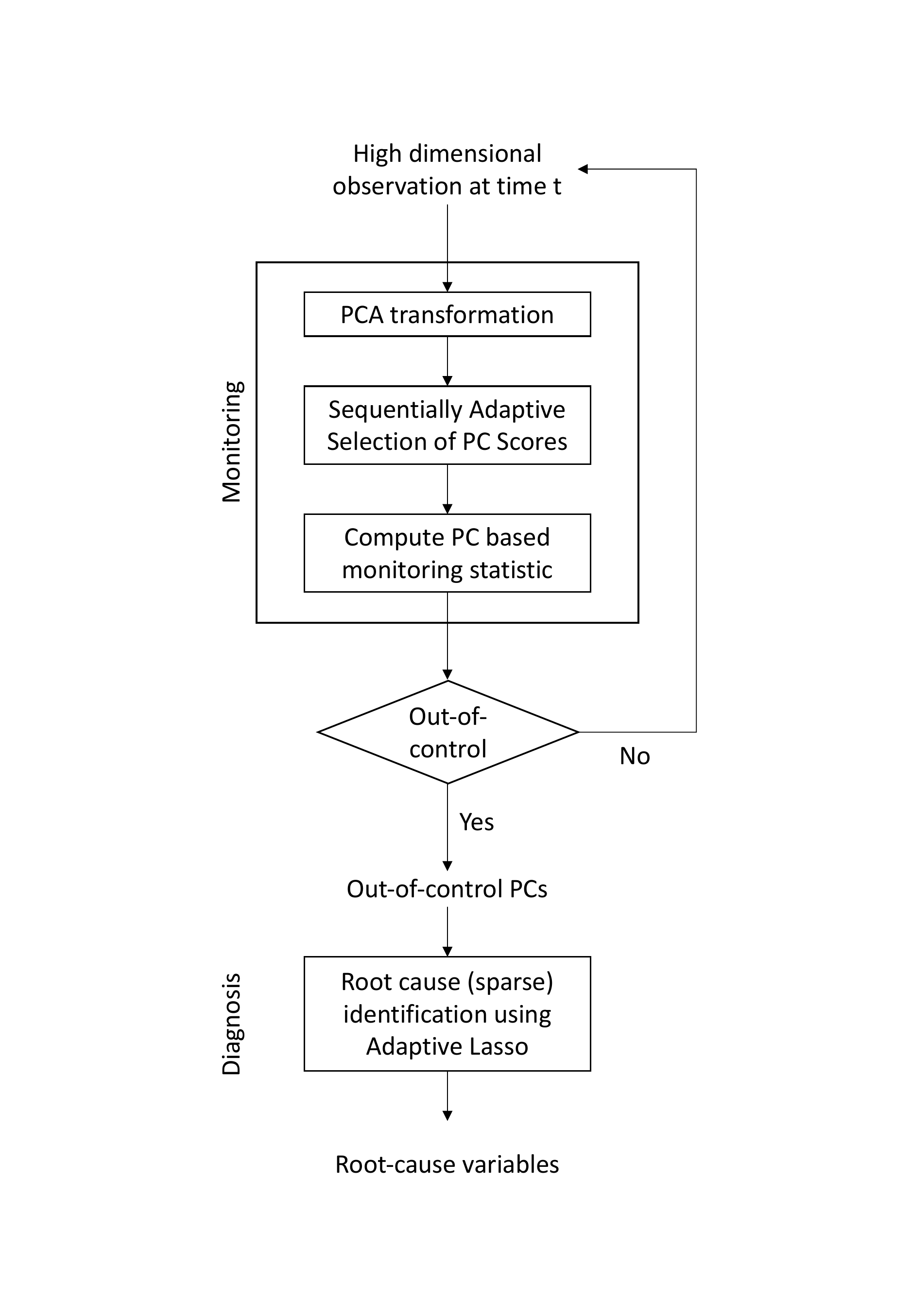}
\caption{Methodology overview}
\label{fig:Methodology-overview}
\end{figure}


\section{Integrated PCA-Based Monitoring and Diagnostics\label{sec:Proposed-Monitoring-Diagnostics-}}


\subsection{Background\label{subsec:Background-Monitoring}}

PCA is a linear transformation widely used for dimension reduction
and generating uncorrelated features. Suppose 
a $p$-dimensional data stream denoted as $X=\{\mathbf{x}^{(t)}:\mathbf{x}^{(t)}\in\mathbb{R}^{p};\ t=1,2,\ldots\}$
is collected at sampling time $t$. Without loss of generality, assume
the data streams are centered (zero mean) with a covariance matrix $\boldsymbol{\Sigma}$.
By applying PCA, this set of correlated observations can be converted
into a set of linearly uncorrelated variables known as \textit{principal
component scores}. The PC scores can be computed by $\mathbf{y}=A^{T}\mathbf{x}$,
where $A\in\mathbb{R}^{p\times p}$ is the matrix of eigenvectors
of $\boldsymbol{\Sigma}$ and $\mathbf{y}\in\mathbb{R}^{p}$ are the PCs.
Also, it can be shown that $var(\mathbf{y}_{j})=\lambda_{j}$ and
$cov(\mathbf{y}_{j},\mathbf{y}_{k})=0,\ \forall(j,k)\in\{1,\dots,p\},\,j\neq k$.

For ease of interpretation, the eigenvectors in $A$ are arranged
such that their corresponding eigenvalues are in decreasing order,
i.e. $\lambda_{1}\geq\lambda_{2}\geq\ldots\geq\lambda_{p}$. This
ordering will be further referred throughout the paper for developing
our methodology. In this paper, we call the principal scores corresponding
to higher and lower eigenvalues as top-PCs and bottom-PCs, respectively.

In most conventional PCA-based methods, top-$k$ PCs are selected
for monitoring because they contain more process information. This
approach, however, may not always result in an appropriate set of monitoring variables.
To illustrate this, we synthesized in-control samples of correlated
data from a multivariate normal distribution with a dimension of 500 ($=p$) and $\bm{\mu}=\bm{0}$, $\sigma=0.1$ followed by out-of-control samples. In the out-of-control data, a random 10\% set of variables are shifted by $0.05\sigma$. We perform PCA on the data and
monitor all PCs separately.

Figure~\ref{fig:normplot_resid-1} shows
the control charts for top 5 and bottom 5 of PCs. As shown in the
figure (left), the top PCs fail to detect the change. As mentioned
earlier, this is due to the fact that top PCs have large variances,
which make them insensitive to small shifts in the mean. On the other
hand, the bottom PCs with smaller variances are more sensitive and
can detect the small shift at the time it occurs, i.e., $t=50$. This
experiment was repeated several times, and each time similar results
were found. 

This shows that depending on the direction and the size of a
change, the traditional approach of selecting top PCs may severely
underperform. Therefore, it is imperative to develop a PC selection
approach that can adaptively select the set of most sensitive PCs and does not depend on the a priori knowledge about the direction
of the change.

\begin{figure}
\centering
\subfloat[top 5 PCs]{\includegraphics[width=0.37\paperwidth]{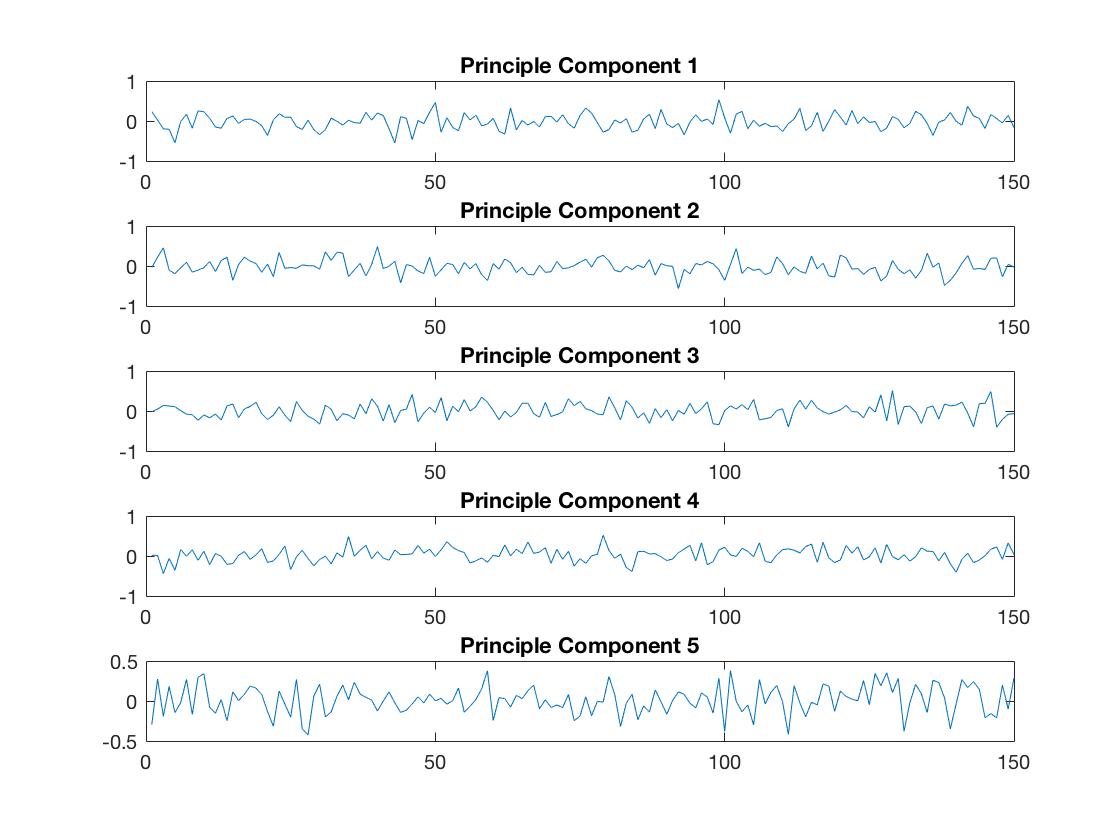}}
\subfloat[low 5 PCs]{\includegraphics[width=0.37\paperwidth]{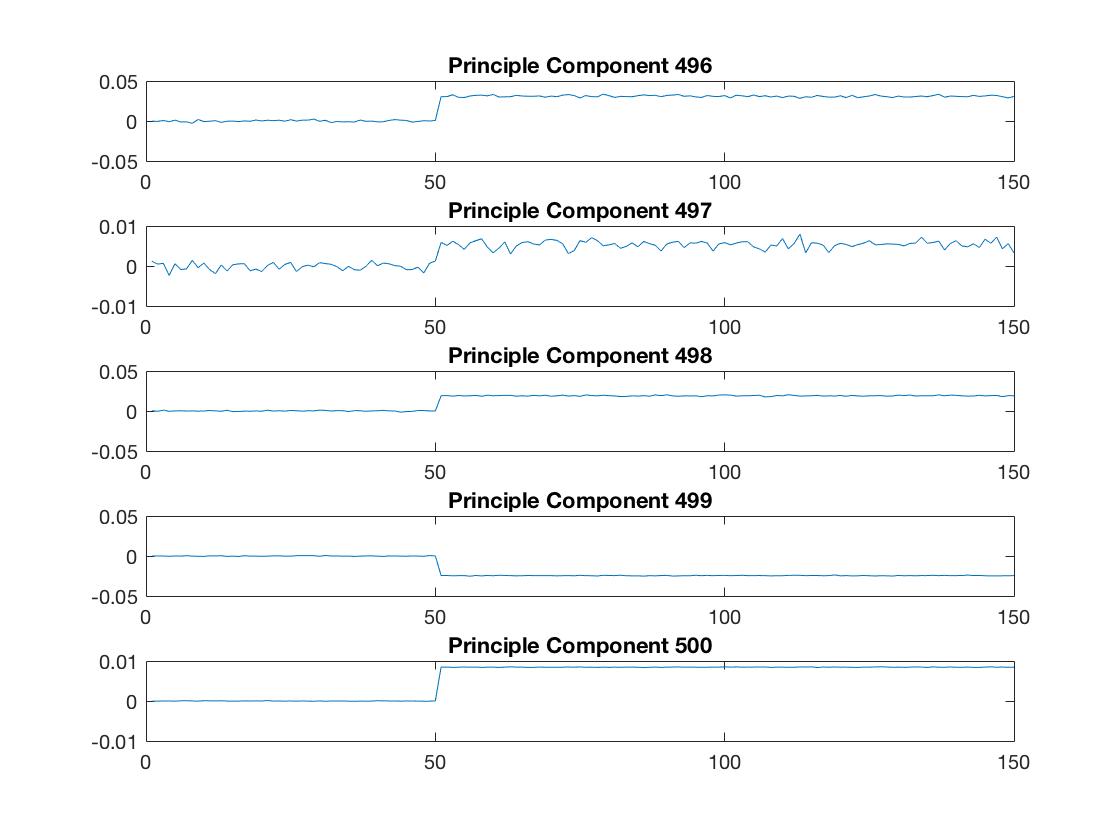}}
\caption{Comparing the behavior of top and low PCs for monitoring when a sparse shift happens in a random set of process of variables}\label{fig:normplot_resid-1}
\end{figure}

\subsection{Adaptive PC selection (APC) for Process Monitoring \label{subsec:Method-III:-sequential PCs}}

In this section, we propose an adaptive PC Selection approach for
process monitoring. This approach selects and monitors a set
of PCs that show a higher deviation from a known in-control state.
Suppose, the in-control observations follow $\mathbf{\mathbf{x}_{\mathit{t}}}\sim N(\mathbf{0},\boldsymbol{\Sigma});t<\tau$,
and at an unknown time $\tau$ a mean shift occurs such that $\mathbf{\mathbf{x}_{\mathit{t}}}\sim N(\pmb{\mathbf{\mu}},\boldsymbol{\Sigma});t\geqslant\tau$,
where, $\pmb{\mu}$ is a non-zero sparse vector, and the process covariance
is assumed to remain constant. Therefore, given the eigenvector matrix
$A\in\mathbb{R}^{p\times p}$, the PC scores after the process change
will become $\bm{\mathbf{y}_{\mathit{t}}}=A^{T}\mathbf{x}_{\mathit{t}}\sim N(A^{T}\bm{\mu},\Lambda)$,
where $\Lambda=diag(\lambda_{1},\lambda_{2},\dots\lambda_{p})$. 

The
standardized expected shift magnitude along the $j^{th}$ PC can be
obtained by $\delta_{j}=\frac{||\pmb{\mu}||\cos\theta_{j}}{\sqrt{\lambda_{j}}};j=1,2,...,p$,
where $\theta_{j}$ is the angle between the shift direction and the
$j^{th}$ PC. As can also be seen from Figure~\ref{fig:PCA_2D}, this can imply that
a PC closer to the shift direction (i.e. smaller $\theta$) will capture
a larger shift magnitude. Moreover, if $\theta$ is similar for two
PCs, then the one with the smaller variance will be more sensitive
to the change. Therefore, to take both measures into account, we work
with standardized PC score, denoted by $\tilde{y}_{tj}=\cfrac{y_{tj}}{\sqrt{\lambda_{j}}}$,
that contains both magnitude and sensitivity information.

We choose the EWMA statistic for monitoring as it is more sensitive
to small changes and includes the information of previous samples.
The EWMA statistic, denoted by $z_{tj}$, is defined as ${z}_{tj}=\gamma\tilde{y}_{tj}+(1-\gamma){z}_{(t-1)j};\;t=1,2,...;\;j=1,2,...,p$,
where ${z}_{0}=0$, and $\gamma\in[0,1]$ is a weight. Under the in-control
process, ${z}_{tj}\sim N\left(0,\sigma^{2}=\frac{\gamma}{1-\gamma}\right)$.
Consequently, its squared standardized value follows a Chi-squared
distribution with one degree of freedom, i.e., $d_{tj}=(\frac{z_{tj}}{\sqrt{\frac{\gamma}{1-\gamma}}})^{2}\sim\chi_{(1)}^{2}$.

When the process is out-of-control, depending on the direction of
the mean shift, a few $d_{tj}$ values will inflate, while the rest
are slightly affected (or unaffected) by the mean shift. To increase
the detection power of the monitoring procedure, these PCs should
be filtered out. For this purpose, following \cite{wang2013montoring}, we use a soft-thresholding operator to define the following aggregated monitoring statistics,

\begin{equation}
R_{t}=\sum_{j=1}^{p}(d_{tj}-\nu)_{+},\label{eq:R-Method-3}
\end{equation}

where the operator $(\cdot)_{+}=\max\left\{ 0,\cdot\right\} $, and
$\nu$ is the threshold value selected based on a desired significance
level of $\chi^{2}$ test. We monitor the $R_{t}$ statistic and raise
an alarm if, $R_{t}>R_{0}$, where $R_{0}$ is the threshold level
found for a desired in-control ARL using simulations.

\noindent \textbf{\emph{Selection of Control Limit ($R_{0}$)}}

To determine an appropriate value of $R_{0}$, we require the
distribution of monitoring statistic $R$. To find the distribution, we first specify the moments
of thresholded values in Proposition~\ref{prop:d_tilde}.

\begin{proposition} 
\label{prop:d_tilde} 
If $d_{tj}\sim\chi_{1}^{2}$, then their soft thresholded values $\tilde{d}_{tj}=(d_{tj}-\nu)_{+}$ follows a bimodal truncated $\chi_{1}^{2}$ distribution, with the following moments

$E(\tilde{d}_{tj})=E((d_{tj}-\nu)_{+})=\frac{1}{\Gamma(0.5)}[\Gamma(0.5,\frac{\nu}{2}+e^{-\frac{\nu}{2}}\sqrt{2\nu}]-\nu P(\chi_{1}^{2}>\nu)$

$E(\tilde{d}^{2}_{tj})=E((d_{tj}-\nu)_{+}^{2})=\frac{1}{\Gamma(0.5)}[3\Gamma(0.5,\frac{\nu}{2}+e^{-\frac{\nu}{2}}\sqrt{2\nu}(3+\nu)]-2\nu E(\tilde{d})-\nu^{2}P(\chi_{1}^{2}>\nu)$
\end{proposition}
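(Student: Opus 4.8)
The plan is to compute the two moments by direct integration against the $\chi_1^2$ density and to reduce everything to upper incomplete gamma functions. Recall that $d_{tj}\sim\chi_1^2$ has density $f(x)=\frac{1}{\sqrt{2\pi}}\,x^{-1/2}e^{-x/2}$ on $x>0$, and that $\tilde d_{tj}=(d_{tj}-\nu)_+$ is identically $0$ on the event $\{d_{tj}\le\nu\}$. Hence the law of $\tilde d_{tj}$ is a mixture of an atom at $0$ carrying mass $P(\chi_1^2\le\nu)$ and an absolutely continuous part on $(0,\infty)$ with sub-density $f(y+\nu)$ --- the atom-plus-truncated-shifted structure described in the statement. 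Since the atom contributes nothing to any moment, for every $k$ we have $E(\tilde d_{tj}^{\,k})=\int_{\nu}^{\infty}(x-\nu)^k f(x)\,dx$.

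First I would expand $(x-\nu)^k$ by the binomial theorem, so the computation reduces to the partial moments $M_m(\nu):=\int_{\nu}^{\infty}x^m f(x)\,dx$ for $m=0,1,2$, where $M_0(\nu)=P(\chi_1^2>\nu)$. The substitution $u=x/2$ gives
\[
M_m(\nu)=\frac{1}{\sqrt{2\pi}}\int_{\nu}^{\infty}x^{m-1/2}e^{-x/2}\,dx=\frac{2^{\,m+1/2}}{\sqrt{2\pi}}\int_{\nu/2}^{\infty}u^{m-1/2}e^{-u}\,du=\frac{2^{\,m}}{\Gamma(1/2)}\,\Gamma\!\big(m+\tfrac12,\tfrac{\nu}{2}\big),
\]
using $2^{m+1/2}/\sqrt{2\pi}=2^m/\Gamma(1/2)$ and $\Gamma(s,z)=\int_z^\infty t^{s-1}e^{-t}\,dt$.

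Next I would apply the recurrence $\Gamma(s+1,z)=s\,\Gamma(s,z)+z^{s}e^{-z}$ to rewrite $\Gamma(3/2,\nu/2)$ and $\Gamma(5/2,\nu/2)$ in terms of $\Gamma(1/2,\nu/2)$ plus elementary terms, and then collect the powers of $\nu/2$ into factors of $\sqrt{2\nu}$. This yields $M_1(\nu)=\frac{1}{\Gamma(1/2)}\big[\Gamma(1/2,\nu/2)+e^{-\nu/2}\sqrt{2\nu}\,\big]$ and $M_2(\nu)=\frac{1}{\Gamma(1/2)}\big[3\Gamma(1/2,\nu/2)+e^{-\nu/2}\sqrt{2\nu}\,(3+\nu)\big]$. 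Plugging these in gives $E(\tilde d_{tj})=M_1(\nu)-\nu M_0(\nu)$, which is the first displayed identity, and, writing $M_1(\nu)=E(\tilde d_{tj})+\nu M_0(\nu)$, $E(\tilde d_{tj}^{\,2})=M_2(\nu)-2\nu M_1(\nu)+\nu^2M_0(\nu)=M_2(\nu)-2\nu E(\tilde d_{tj})-\nu^2M_0(\nu)$, which is the second.

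There is no genuine obstacle here --- the proof is a routine but slightly delicate integration. The part needing care is the constant bookkeeping: tracking the $2^{m+1/2}/\sqrt{2\pi}$ prefactor and repeatedly applying the incomplete-gamma recurrence without dropping a factor of $2$, and correctly converting $(\nu/2)^{1/2}$ and $(\nu/2)^{3/2}$ into $\sqrt{2\nu}$ and $\nu\sqrt{2\nu}$. I would sanity-check the final formulas by setting $\nu=0$, where $\Gamma(1/2,0)=\Gamma(1/2)$ forces $E(\tilde d_{tj})=1=E(\chi_1^2)$ and $E(\tilde d_{tj}^{\,2})=3=E(\chi_1^4)$, and by noting that both expressions decay to $0$ as $\nu\to\infty$.
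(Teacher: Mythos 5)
Your proposal is correct and follows essentially the same route as the paper's Appendix~\ref{Ap:prop_dtilde_prove}: both reduce the moments to the partial integrals $\int_{\nu}^{\infty}x^{m}f(x)\,dx$ of the $\chi_1^2$ density (you via the binomial expansion, the paper via conditioning on $\{d_{tj}>\nu\}$, which is the same decomposition) and evaluate them as upper incomplete gamma functions, arriving at identical expressions for $M_1$ and $M_2$ and hence the stated formulas. Your explicit use of the recurrence $\Gamma(s+1,z)=s\,\Gamma(s,z)+z^{s}e^{-z}$ and the $\nu=0$ sanity check are welcome additions but do not change the argument.
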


Proof is provided in Appendix~\ref{Ap:prop_dtilde_prove}. 

Using the Central Limit Theorem, $R ~ N(p\mu_{\tilde{d}}, \sqrt{p}\sigma_{\tilde{d}})$.
Hence, $R_{0}$ for a desired type I error, $\alpha$, is
\begin{equation}
R_{0}=p\mu_{\tilde{d}}+\sqrt{p}\sigma_{\tilde{d}}\Phi^{-1}(1-\alpha),\label{eq:R0_choice}
\end{equation}

\noindent where $\Phi$ is the inverse normal cdf. To validate this approach and the normal
approximation, we perform simulations in Section~\ref{sec:sim_valid_thr}.
The results show that the empirical $\alpha$ obtained by this approach
is very close to the true $\alpha$.

\subsection{PC-based Signal Recovery (PCSR) Diagnosis Methodology\label{subsec:Diagnostics-Methodology}}

In monitoring high-dimensional data streams, apart from quick detection of changes,
precise fault diagnosis to identify accountable variables is extremely
crucial. Diagnosis aims at isolating
the shifted variables, which will help identify and eliminate the root
causes of a problem. However, despite its importance, very
few diagnostic methods exist for high-dimensional data streams that is 
integrable with a PCA-based monitoring.

To that end, we propose a diagnostics approach that seamlessly
integrates with the proposed PCA-based monitoring for large data streams.
Inspired by Compressed Sensing (CS), we develop an adaptive lasso
formulation to recover the variables responsible for an out-of-control
alarm. We assume that only the process mean has shifted and the
shift is sparse. 

In CS, a high-dimensional sparse original signal can be reconstructed
from noisy transformed observations by finding solutions to an \foreignlanguage{english}{underdetermined
linear system}. In other words, given a set of observations $\mathbf{y}$,
and a transformation (sensing) matrix $\Upsilon$, a sparse unknown
original signal $\pmb{\mu}$ can be recovered from $\mathbf{y}=\Upsilon\pmb{\mu}+\pmb{\epsilon}$,
where $\pmb{\epsilon}$ denotes the random errors. 

The outcome of a PC monitoring method can be formulated similarly
to identify the shifted process variables. Without loss of generality,
we suppose the process has mean $\mathbf{0}$ during in-control that
changes to a sparse mean $\pmb\mu$ during the out-of-control of state.
Therefore, the out-of-control observations follow $\mathbf{x}=\pmb{\mu}+\pmb{\epsilon}$,
where $\pmb{\epsilon} \sim(\bm{0},\Sigma)$. Consequently, the out-of-control
PC scores are,

\begin{equation}
\mathbf{y}=A\mathbf{x}=A\pmb{\mu}+\tilde{\pmb{\epsilon}},\label{eq:y=00003D00003DAx}
\end{equation}

\noindent where, $\tilde{\pmb{\epsilon}}=A\pmb{\epsilon}$ is the noise in the
PC domain, with zero mean and covariance of $\Lambda$= diag($\lambda_{1},\lambda_{2},\dots\lambda_{p}$).

Looking at Eq. \ref{eq:y=00003D00003DAx}, we can notice its similarity
with a compressed sensing problem. In Eq. \ref{eq:y=00003D00003DAx},
the eigenvector matrix, A, and the principal scores, $\bm{y}$, are
known, and we wish to estimate the shifted mean $\mu^ {}$ when an
out-of-control situation is detected after monitoring. \cite{candes2005decoding}
and~\cite{haupt2006signal} showed that a least squares objective
function with $L_{1}$ penalty, also known as lasso, can be used to
estimate the sparse vector $\pmb\mu$. Since lasso estimates in general
are not consistent, we use adaptive lasso \cite{zou2006adaptive}
to build our diagnosis model. Specifically, 

\begin{equation}
\hat{\pmb{\mu}}=\operatornamewithlimits{argmin}_{\pmb{\mu}}\{||\mathbf{y}-A\mathbf{\bm{\mu}}||_{l_{2}}^{2}+r\sum_{j=1}^{p}w_{j}|\mu_{j}|,\label{eq:weightedL1-1}
\end{equation}

\noindent where $r$ is a nonnegative regularization parameter and $\textbf{w}=\frac{1}{\hat{\mu}_{OLS}}$ is the data dependent weight vector.

One problem in solving Eq.\ref{eq:weightedL1-1} is that the covariance matrix
of $\pmb{\epsilon}'$ is not homogeneous. The variance heterogeneity may affect
the estimation performance. To address this issue, we apply the following
transformation to get constant variances for all error terms. 
\begin{equation}
\bm{\ y}^{*}=\Lambda^{-\frac{1}{2}}\bm{y},\ \ \ \ \ A^{*}=\Lambda^{-\frac{1}{2}}A,\ \ \ \ \pmb{\epsilon}^{*}=\Lambda^{-\frac{1}{2}}\pmb{\epsilon}^{\prime}.\label{eq:lasso_transform}
\end{equation}

Consequently Eq.~\ref{eq:y=00003D00003DAx} is transformed to $\mathbf{y}^{*}=A^{*}\pmb{\mu}+\pmb{\epsilon}^{*}$,
where $\pmb{\epsilon}^{*}\sim(\textbf{0},\textbf{I})$ with $\textbf{I}$
as a $p$ dimensional identity matrix. The updated adaptive lasso
formulation is given by.

\begin{equation}
\hat{\pmb{\mu}}=\operatornamewithlimits{argmin}_{\pmb{\mu}}\{||\mathbf{y^{*}}-A^{*}\mathbf{\bm{\mu}}||_{l_{2}}^{2}+r\sum_{j=1}^{p}w_{j}|\mu_{j}|\}
\label{eq:weightedL1}
\end{equation}

\noindent Where $w_{j}=1/|\hat{\mu}_{j}|$, and $\hat{\pmb\mu}$ is a
root p-consistent estimator to $\pmb\mu$, e.g., $\hat{\pmb\mu}=\hat{\pmb{\mu}}_{ols}$.
The optimization problem in Eq. \ref{eq:weightedL1} can be solved
using various optimization algorithms, such as gradient descent, proximal
descent, and LARS. In our implementation, we used the gradient descent
method. 

After finding the solution, the set of variables whose corresponding
estimated $\mu_{j}$ is non-zero is considered as the altered variables.
It should be noted that according to Theorem~2 in \cite{zou2006adaptive},
the estimated mean is consistent, loosely meaning that when $\textbf{A}^{*}$
has large dimension (i.e. large $p$), the non-zero components of
$\pmb\mu$ are correctly identified. This implies that the larger the
number of data streams the higher is the likelihood of correct
diagnosis. See Appendix~\ref{Ap:consistency} for more details. 

To determine the value of parameter $r$, one can use Bayesian
Information Criterion (BIC) (\cite{schwarz1978estimating}) and choose
the $r$ value that results in the smallest BIC value. The reason
behind choosing BIC is that it can determine the true sparse model
if the true model is included in the candidate set (\cite{yang2005can}).
Since, in the diagnosis problem the objective is to detect the nonzero
elements (shifted variables) rather than estimation of the out-of-control
mean, BIC is a proper criterion for diagnosis \cite{zou2011lasso}. 

\section{Experimental Analysis \label{sec:Experimental-Analysis}}
In this section, first we validate Proposition \ref{prop:d_tilde} using
simulations. Afterwards, we study the performance of the proposed
monitoring-diagnostic method in change detection and in terms of quick
detection of mean shifts and identification of altered variables.
For all the experiments, we simulate data streams such that in-control data follows a multivariate normal distribution $~N(\mathbf{0},\Sigma)$ and the out-of-control is $N(\pmb{\mu}_{1},\Sigma)$, $\pmb{\mu}_{1}$ is sparse.
We carry out the simulations for different levels and types of shifts and the covariance structure and compare the results with existing methods as benchmarks.

\subsection{Validation of Proposition \ref{prop:d_tilde} for Choosing Control Limits}
\label{sec:sim_valid_thr} 

To validate Proposition \ref{prop:d_tilde},
we perform two sets of experiments. In the first experiment, we generate
$d_{j}\sim\chi_{1}^{2}$ for $j=1,\cdots,p$, and $R_{t}=\sum_{j=1}^{p}(d_{tj}-\nu)_{+}$
for $t=1,\cdots,1000$, similar to Eq. \ref{eq:R-Method-3}. Then
we calculate $R_{0}$ using Eq.~\ref{eq:R0_choice} for desired $\alpha=0.05$.
We calculate the empirical Type I error, denoted by $\tilde{\alpha}$,
as the fraction of times $R_{t}$'s pass the control limit $R_{0}$.
We perform this experiment for different values of $p$ and $\nu$
and replicate each scenario 1000 times. Finally, we report the average
empirical Type I errors in table \ref{tab:emp_type1_ex1}.

In the second experiment, first we simulate $\textbf{x}_{t}$ for
$t=1,\cdots,1000$ as a $p$ dimensional normal distribution random
variables with random covariance matrix and zero mean. Given the eigenvector
matrix $\textbf{A}$, we calculate its PC scores, and its
corresponding EWMA statistic using $\gamma=0.4$. 
Consequently, its squared standardized value are calculated as $d_{tj}=(\frac{z_{tj}}{\sqrt{\frac{\gamma}{1-\gamma}}})^{2}$.

Here, for each observation we define $R_{t}=\sum_{j=1}^{p}(d_{tj}-\nu)_{+}$,
we repeat this procedure 1000 times. Similar to previous experiment,
we calculate $R_{0}$ using Eq.~\ref{eq:R0_choice} for desired $\alpha=0.05$.
We calculate the empirical Type I error, $\tilde{\alpha}$, as the
fraction of times $R_{t}$'s pass the control limit $R_{0}$. We replicate
each $(p, \nu)$ scenario 1000 times and we report the average empirical
type I errors in table \ref{tab:emp_type1_ex2}.

As can be seen from Table~\ref{tab:emp_type1_ex1}-\ref{tab:emp_type1_ex2}, 
as $p$ increases, the empirical
Type I error approaches to its true value $\alpha=0.05$. Moreover,
for large $p$, the result is less sensitive to the choice of the
threshold value, $\nu$. Hence, it shows the validity of the proposed
approach for finding control limits. 

Note that the main difference
between these studies is the independence of $R_{t}$'s. In the first
study, $R_{t}$'s are independently generated, whereas in the second
study, $R_{t}$'s are calculated using EWMA statistics, which are not
independent. The larger bias in the results of the second study is
mainly because the monitoring statistic values are autocorrelated.
However, for very large $p$ (e.g, $p>5000$), this difference is
negligible. For smaller $p$, we would suggest using a Monte Carlo
simulation to determine the control limits.

\begin{table}[h]
\centering \caption{Empirical type I error using first experiment}
\label{tab:emp_type1_ex1} %
\begin{tabular}{cllllll}
\multicolumn{1}{l}{} &  & \multicolumn{5}{c}{p}\tabularnewline
\multicolumn{1}{l}{} & \multicolumn{1}{l}{} & 100  & 500  & 1000  & 5000  & 10000 \tabularnewline
\hline 
\multirow{6}{*}{$\nu$}  & \multicolumn{1}{l}{0.05} & 0.0090  & 0.0067  & 0.0063  & 0.0056  & 0.0053 \tabularnewline
 & \multicolumn{1}{l}{0.10} & 0.0091  & 0.0067  & 0.0060  & 0.0055  & 0.0052 \tabularnewline
 & \multicolumn{1}{l}{0.15} & 0.0090  & 0.0067  & 0.0062  & 0.0056  & 0.0054 \tabularnewline
 & \multicolumn{1}{l}{0.20} & 0.0090  & 0.0066  & 0.0064  & 0.0055  & 0.0054 \tabularnewline
 & \multicolumn{1}{l}{0.25} & 0.0091  & 0.0068  & 0.0061  & 0.0055  & 0.0054 \tabularnewline
 & \multicolumn{1}{l}{0.35} & 0.0092  & 0.0068  & 0.0063  & 0.0055  & 0.0053 \tabularnewline
\end{tabular}
\end{table}

\begin{table}[h]
\centering \caption{Empirical type I error using second experiment}
\label{tab:emp_type1_ex2} %
\begin{tabular}{cllllll}
\multicolumn{1}{l}{} &  & \multicolumn{5}{c}{p}\tabularnewline
\multicolumn{1}{l}{} & \multicolumn{1}{l}{} & 100  & 500  & 1000  & 5000  & 10000 \tabularnewline
\hline 
\multirow{6}{*}{$\nu$}  & \multicolumn{1}{l}{0.05} & 0.0091  & 0.0071  & 0.0068  & 0.0065  & 0.0050 \tabularnewline
 & \multicolumn{1}{l}{0.10} & 0.0091  & 0.0073  & 0.0067  & 0.0064  & 0.0050 \tabularnewline
 & \multicolumn{1}{l}{0.15} & 0.0090  & 0.0073  & 0.0067  & 0.0063  & 0.0050 \tabularnewline
 & \multicolumn{1}{l}{0.20} & 0.0091  & 0.0072  & 0.0067  & 0.0063  & 0.0051 \tabularnewline
 & \multicolumn{1}{l}{0.25} & 0.0089  & 0.0072  & 0.0068  & 0.0061  & 0.0049 \tabularnewline
 & \multicolumn{1}{l}{0.35} & 0.0083  & 0.0066  & 0.0063  & 0.0057  & 0.0047 \tabularnewline
\end{tabular}
\end{table}

\subsection{Monitoring Methods Analysis\label{subsec:Monitoring-Methods-Analysis}}

In this section, we conduct various simulations to validate the performance
of the proposed monitoring method based on the Average Run Length
(ARL) and its standard error for different magnitudes of shifts. Specifically,
the following scenarios are considered: 
\begin{description}
\item [{I.}] Random covariance structure and random shift: To generate
the random covariance matrix, we use the Wishart distribution with
diagonal entries equal to 1. To generate a sparse mean shift, we randomly
select 20\% of the process variables and shift them by $\delta$. 
\item [{II.}] Block diagonal covariance: This scenario mimics the situations
where each data stream is correlated with only a subset of the data
streams. The covariance matrix used in this scenario has $K=12$ blocks,
denoted as $B_{k}$, $k=1,...,K$. Each block $B_{k}$ is a
random semi-positive definite matrix generated from a Wishart distribution.
To generate out-of-control data, we shift the mean of some of the variables
that belong to only one of the blocks ($B_{k}$), by $\delta$ , i.e.,
$\mu{}_{j}=\begin{cases}
\begin{array}{c}
\delta\ j\in B_{k}\\
0\ j\notin B_{k}
\end{array}\end{cases}$. 
\end{description}
In each scenario, $p$ data streams are generated. We run the simulations for, $p= 100$, $p= 1000$, and $p= 10,000$ to evaluate the performances in different dimensions.
We apply the proposed monitoring method, APC, and compare it with
three existing methods: 

a) $T_{new}$ by \cite{zou2015efficient}.
This monitoring method is based on a goodness-of-fit test of the local
CUSUM statistics from each data stream. 

b) TRAS by \cite{liu2015adaptive}.
Top-r based adaptive sampling (TRAS) is an adaptive sampling strategy
that uses the sum of top-r local statistics for monitoring. Since
this method works only for independent variables, we will implement
it on PCs rather than original data, 

c) Traditional PCA-based monitoring. In this approach, the selected
number of components to retain in the model is based on the cumulative
percentage of variance (CPV) equal to $90$. Control charts are constructed by using
the Hotelling's $T^{2}$ statistic and the $Q$ statistic (\cite{de2015overview}).

To detect an out-of-control condition, the control limits are set
such that the ARL for in-control observations is equal to 200 (this
corresponds to a significance level of 0.005). Each control limit
is calculated through 1,000 replications. The results are shown in
Figures~\ref{fig:Comparison-of-different-1}-\ref{fig:Comparison-of-different-3}. 

For $p=100$, as shown in Figure \ref{fig:Comparison-of-different-1}(a),
the proposed APC markedly outperforms the other benchmark methods. Even for shifts as small as $\delta=0.1\sigma$, the ARL for APC is $4.06$. This is about fourteen times smaller than the second best method, which is TRAS with ARL equal to $56$. Moreover, for shifts $\delta\geq0.1\sigma$, APC detects the shift almost instantly (i.e., $ARL_{1}=1$). 

The results for Scenario~II in Figure \ref{fig:Comparison-of-different-1}(b) also show that APC is superior to others, especially for moderate and large shifts. As an example, for a shift with the magnitude of
$\delta=0.25\sigma$, APC's ARL is $17.67$, while this values for the
best benchmark (TRAS) is $61.37$.  As expected, the out-of-control ARL values for all methods in Scenario~II is larger than those in Scenario~I. 

For higher dimensions, the APC's ability to detect shifts becomes even better while the other method's performances stay the same or deteriorate. This shows that as dimension grows, the shifts are easier to be captured in PC scores that are in the direction of the shift.

To summarize, this study indicates that the APC method outperforms other methods for detecting
small values of shifts. Also, as dimension grows APC works better in detecting a change promptly. This can be attributed to the adaptive nature of the proposed monitoring statistic. 



\begin{figure}
\centering
\subfloat[scenario I]{\includegraphics[width=0.47\linewidth]{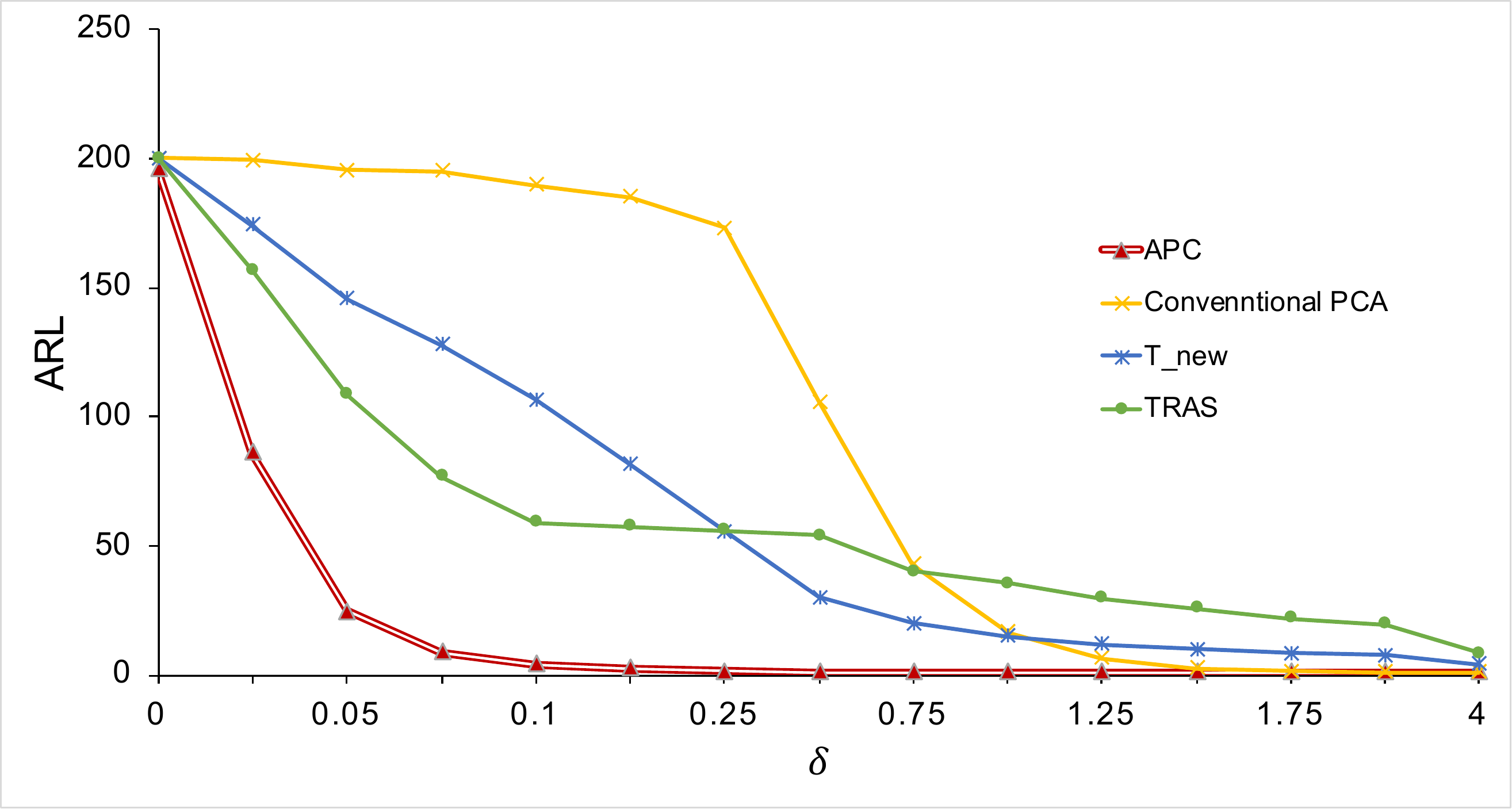}}
\subfloat[scenario II]{\includegraphics[width=0.47\linewidth]{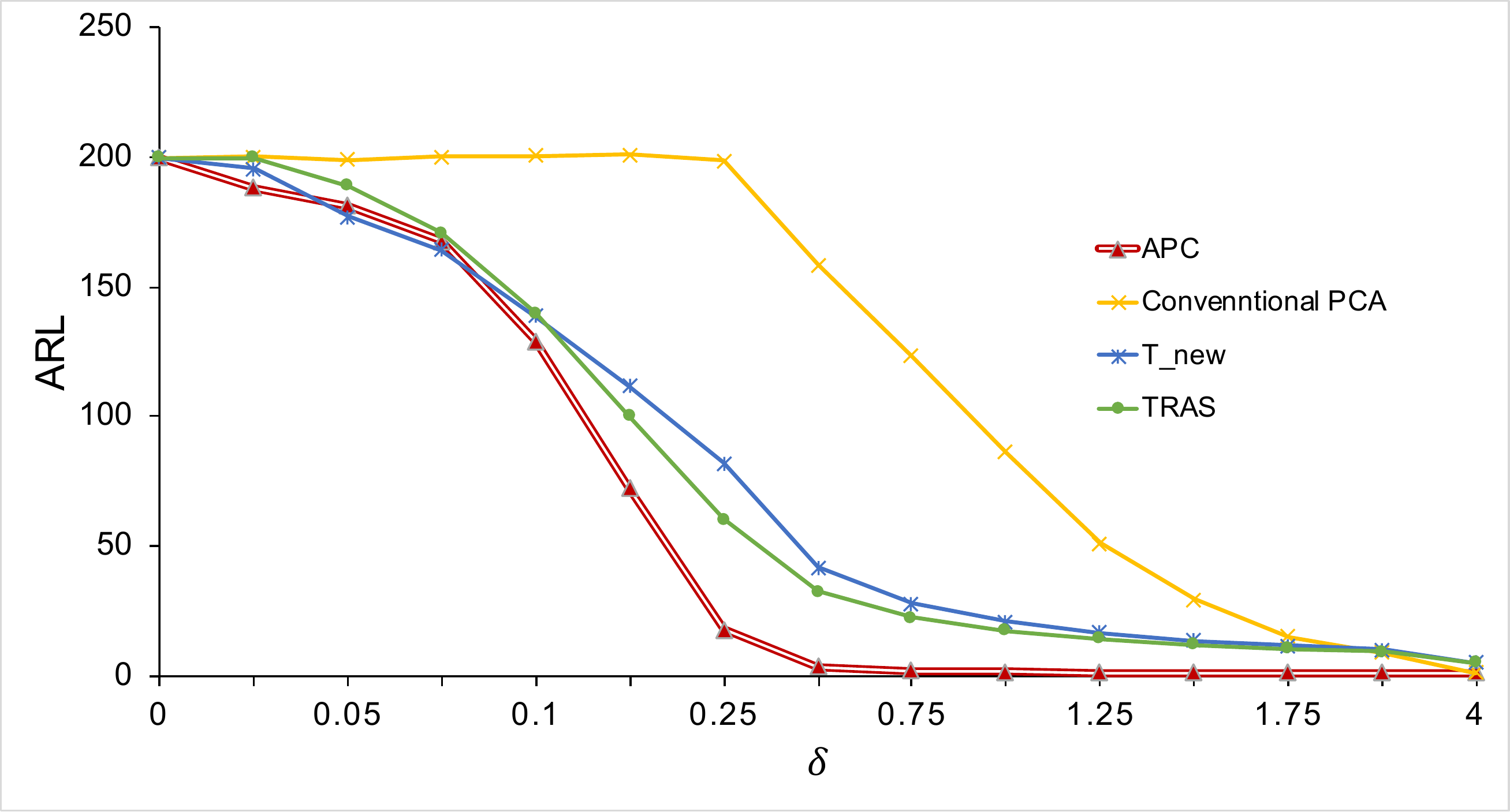}}
\caption{ARL of scenarios I, II for different values of $\delta$ (shift magnitude) for $p=100$}
\label{fig:Comparison-of-different-1} 
\end{figure}

\begin{figure}
\centering
\subfloat[scenario I]{\includegraphics[width=0.47\linewidth]{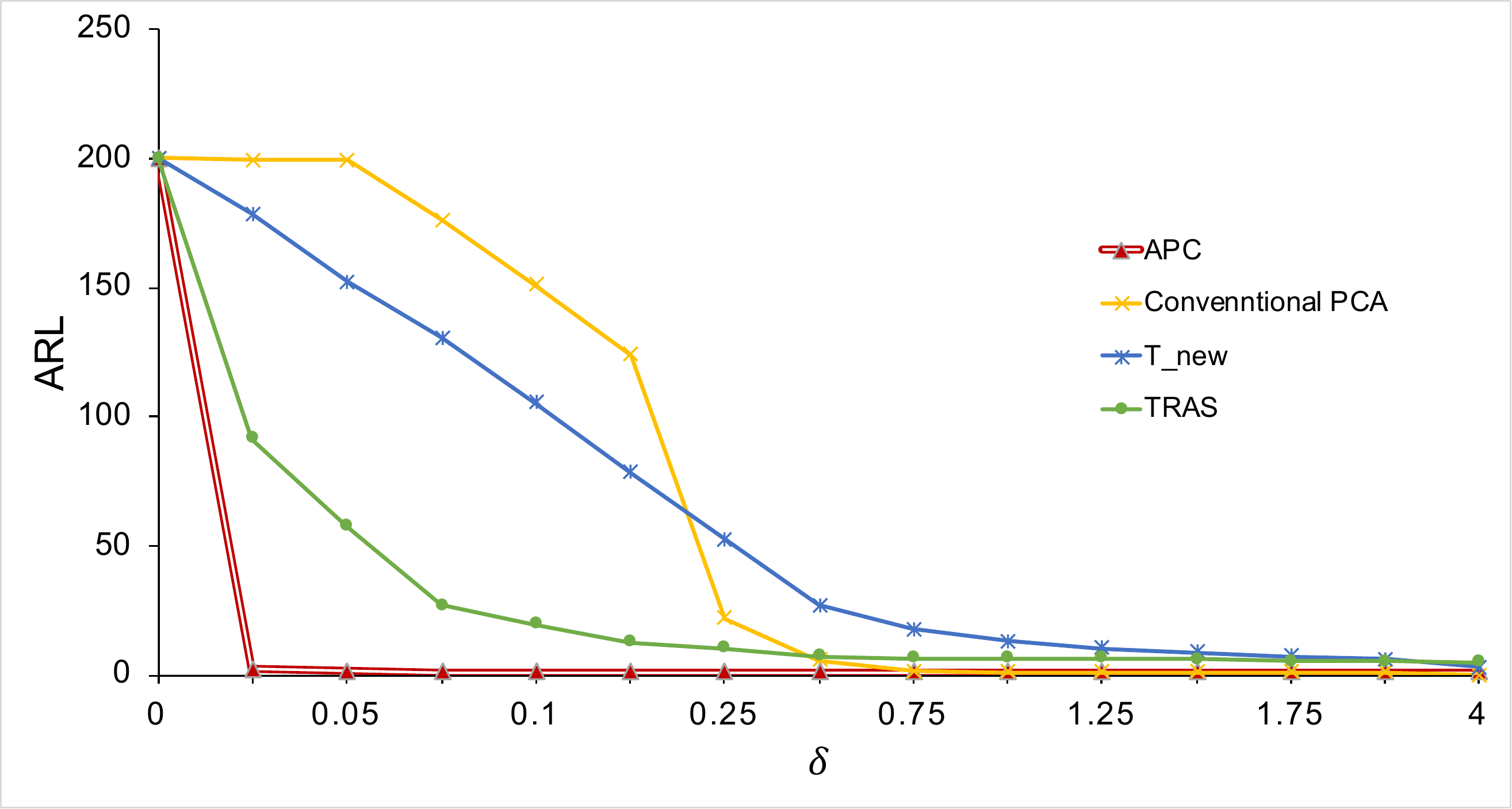}}
\subfloat[scenario II]{\includegraphics[width=0.47\linewidth]{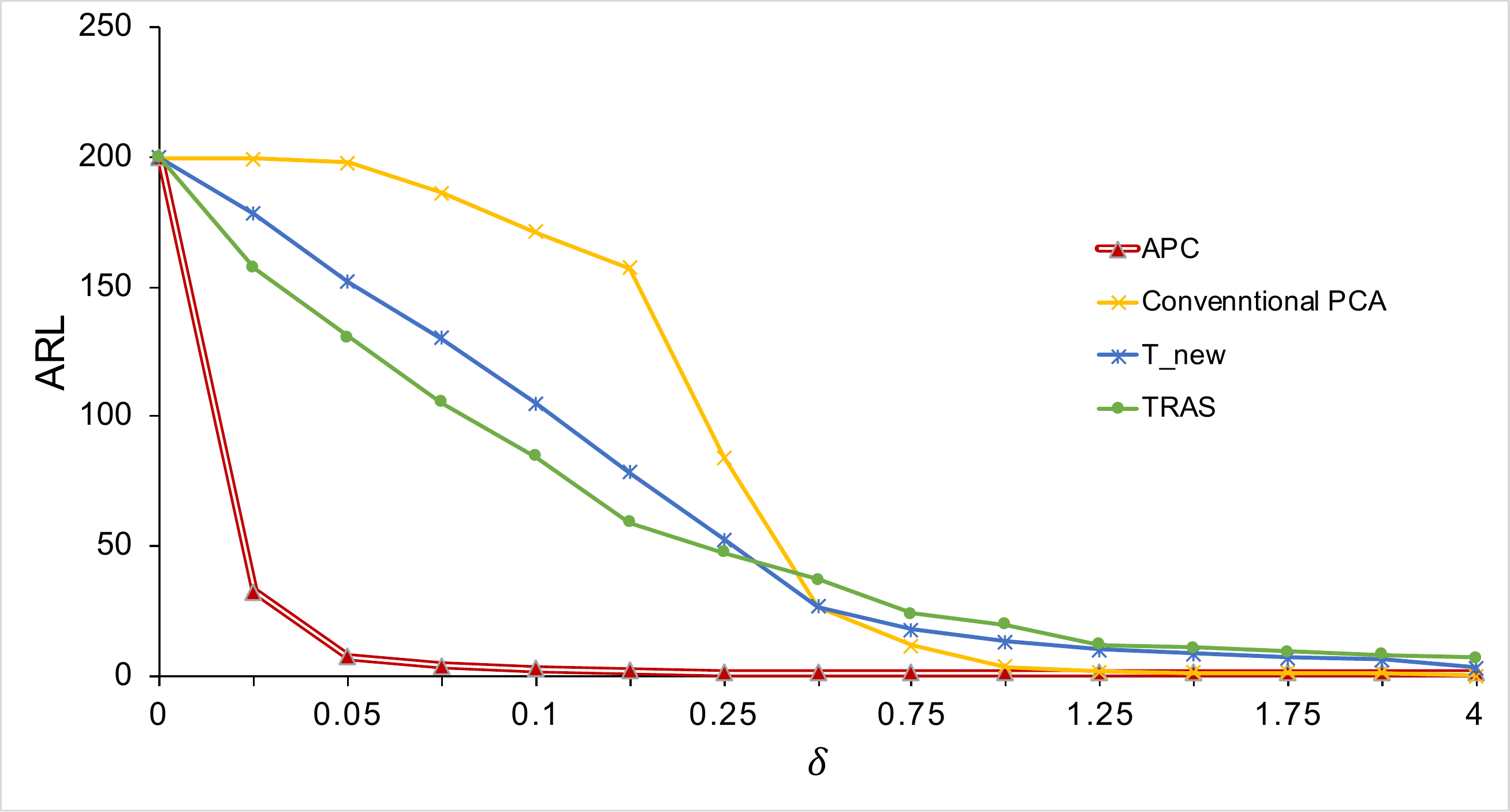}}
\caption{ARL of scenarios I, II for different values of $\delta$ (shift magnitude) for $p=1000$}
\label{fig:Comparison-of-different-2} 
\end{figure}

\begin{figure}
\centering
\subfloat[scenario I]{\includegraphics[width=0.47\linewidth]{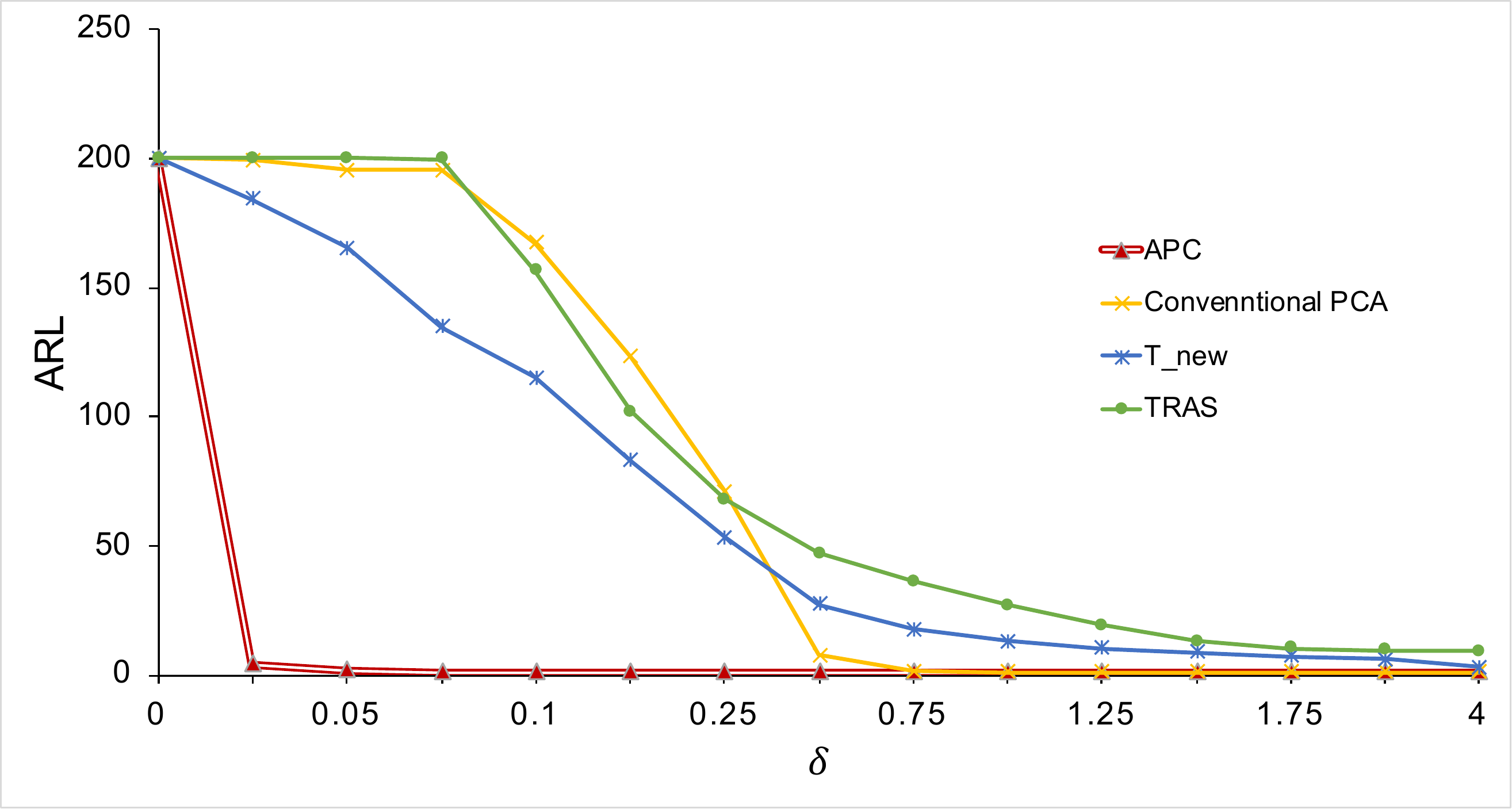}}
\subfloat[scenario II]{\includegraphics[width=0.47\linewidth]{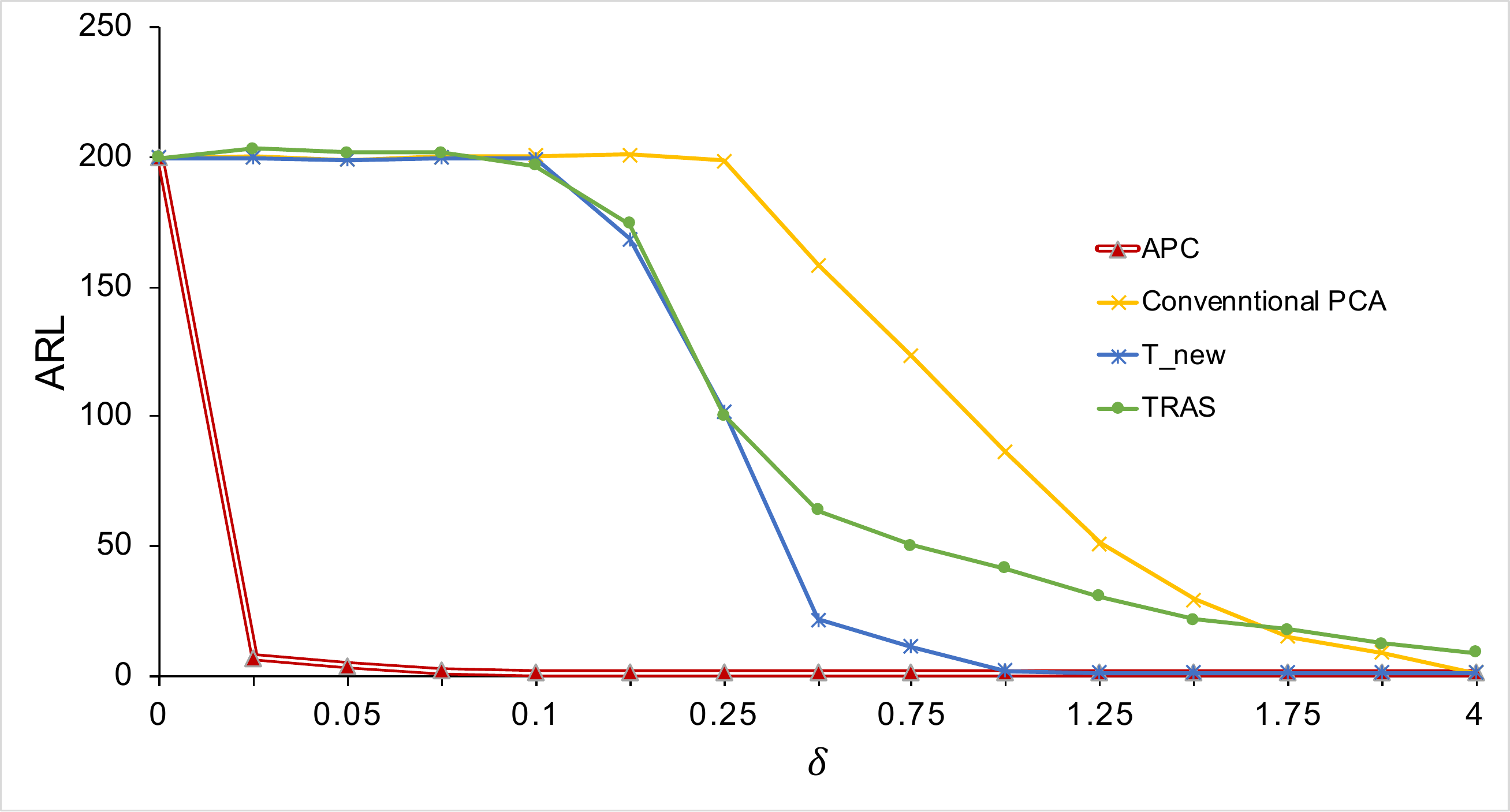}}
\caption{ARL of scenarios I, II for different values of $\delta$ (shift magnitude) for $p=10,000$}
\label{fig:Comparison-of-different-3} 
\end{figure}

\subsection{Diagnosis Analysis}

In this section, in addition to Scenarios~I and II presented in the previous section, we add another scenario (Scenario~III) with an autoregressive covariance
matrix, i.e., $\rho_{ij}={|0.5|^{(i-j)}}$ for variables ${i,j}$. This covariance matrix impose the variables close to each other to have higher covariance and as the variables go farther in the matrix, their covariance becomes smaller.
We validate the performance of the proposed diagnosis method for different
percentages of shifted variables (PS) as well as their shift magnitudes
($\delta$), using the following performance measures: (a) false negative
percentage (\%FN), defined as the percentage of the number of variables
that are not detected over the number of all faulty variables; (b)
false positive percentage (\%FP), defined as the percentage of the ratio of number of variables that are mistakenly detected as faulty
over the number of all not-faulty variables; (c) parameter selection
score (PSS), defined as the total number of variables that are labeled
incorrectly (either as faulty or not-faulty); and (d) F1-Score, defined
as the harmonic average of the precision and recall, and indicates
our overall performance combining the FP and FN. 

For FN, FP and PSS measures, the smaller the value, the better
the performance, whereas for F1-score the higher the better. We compare
the performance of our proposed method with the Lasso-based  diagnosis approach proposed by \cite{zou2011lasso} called LEB. LEB
is an LASSO-based diagnostic approach approach for
diagnosis of sparse changes using BIC and the adaptive LASSO variable selection. The comparison results for $p=100$  are shown in Tables~\ref{tab:Diag_compare_sen1}-\ref{tab:Diag_compare_sen4}, 
and Figures \ref{fig:F1_sen1_compare}-\ref{fig:F1_sen4_compare} for scenarios I, II, and III, respectively.  

In these tables PS denotes the percentage
of shifted process variables. As shown in Table~\ref{tab:Diag_compare_sen1}
and Figure~\ref{fig:F1_sen1_compare}, under scenario I, when shift
occurs in a random set of variables with a random covariance matrix,
our proposed PCSR performs better than the LEB (\cite{zou2011lasso}) for most of the cases
except for the case with PS=10\% and small shifts (i.e., $\delta=0.7\sigma$
). Even in this case, PCST is close to LEB. However,
for larger shifts, PCSR outperforms LEB. For instance, for a shift equal to $1\sigma$ and percentage of shifted variables equal to $10\%$ the $F1$ accuracy using PCSR is equal to $0.9881$ while it is equal to $0.9363$ for LEB. 

In Scenario II, PCSR
clearly outperforms the LEB method. For example for a shift equal to $0.7\sigma$
on 10\% of variables, PCSR's F1-score is 0.6802 while, the LEB F1 score is 0.3725 (see Table~\ref{tab:Diag_compare_sen2} and Figure~\ref{fig:F1_sen2_compare}). 

Also, the results in scenario III indicate the superior performance of that our method (see Table~\ref{tab:Diag_compare_sen4} and Figure~\ref{fig:F1_sen4_compare}). For instance, for a  $0.5\sigma$ shift on 25\% of variables, PCSR's F1-score is 0.7173 and 0.4648 for LEB. 

These results show that for random non-sparse covariance matrices, the LEB method and PCSR method performs almost similarly. However, for sparse covariance matrices such as a block covariance or an autoregressive covariance, PCSR clearly outperforms LEB method. These sparse occurance of covariance matrices are very common in real world. This is because of the fact that in many situations, each data stream is correlated only with a small group of other data streams, but is not correlated with all other data streams collected in the system. Hence, a method that can detect the changes in such systems is necessary and more appropriate for real-world applications.

In sum, the results of the simulation study show the effectiveness
of our method in identifying the set of altered variables and its superiority over the current state-of-the-art.

\begin{table}[!htb]
\centering %
\begin{tabular}{cc|cccc|cccc}
\hline 
\multicolumn{1}{c}{} & \multicolumn{1}{c}{\textbf{Shift}} & \multicolumn{4}{c}{\textbf{PCSR}} & \multicolumn{4}{c}{\textbf{LEB}}\tabularnewline \hline 
\multirow{8}{*}{\rotatebox[origin=c]{90}{\textbf{PS=0.1}}}  &  & \textbf{FP\%}  & \textbf{FN\%}  & \textbf{PSS}  & \textbf{F1}  & \textbf{FP\%}  & \textbf{FN\%}  & \textbf{PSS}  & \textbf{F1} \tabularnewline
 & 0.1  & 86.99  & 5.289  & 13.459  & 0.1603  & 85.86  & 4.28  & 12.438  & \textbf{0.1790} \tabularnewline
 & 0.3  & 41.36  & 8.652  & 11.923  & 0.5001  & 38.11  & 6.757  & 9.891  & \textbf{0.5554} \tabularnewline
 & 0.5  & 8.05  & 6.107  & 6.301  & 0.757  & 6.56  & 5.501  & 5.607  & \textbf{0.7763} \tabularnewline
 & 0.7  & 0.62  & 2.272  & 2.107  & \textbf{0.9141}  & 0.62  & 3.924  & 3.594  & 0.8538 \tabularnewline
 & 1  & 0  & 0.29  & 0.261  & \textbf{0.9881}  & 0  & 1.580  & 1.422  & 0.9363 \tabularnewline
 & 1.25  & 0  & 0.1567  & 0.141  & \textbf{0.9934}  & 0  & 0.761  & 0.685  & 0.968 \tabularnewline
 & 1.5  & 0  & 0.156  & 0.14  & \textbf{0.9935}  & 0  & 0.347  & 0.312  & 0.9850 \tabularnewline
\hline 
\multirow{7}{*}{\rotatebox[origin=c]{90}{\textbf{PS=0.15}}}  & 0.1  & 87.267  & 6.388  & 18.52  & \textbf{0.1672}  & 92.067  & 3.466  & 16.756  & 0.1158 \tabularnewline
 & 0.3  & 42.773  & 10.231  & 15.112  & \textbf{0.5323}  & 45.84  & 9.379  & 14.848  & 0.520 \tabularnewline
 & 0.5  & 9.16  & 7.029  & 7.349  & \textbf{0.7927}  & 8.853  & 7.077  & 7.343  & 0.7916 \tabularnewline
 & 0.7  & 0.687  & 2.68  & 2.381  & \textbf{0.9310}  & 0.433  & 3.914  & 3.392  & 0.9015 \tabularnewline
 & 1  & 0  & 0.364 & 0.309  & \textbf{0.9903}  & 0  & 1.711  & 1.454  & 0.9552 \tabularnewline
 & 1.25  & 0  & 0.242  & 0.206  & \textbf{0.9935}  & 0  & 0.854  & 0.726  & 0.9772 \tabularnewline
 & 1.5  & 0  & 0.229  & 0.195  & \textbf{0.9938}  & 0  & 0.439  & 0.373  & 0.9882 \tabularnewline
\hline 
\multirow{7}{*}{\rotatebox[origin=c]{90}{\textbf{PS=0.25}}}  & 0.1  & 86.676  & 7.303  & 27.146  & \textbf{0.1941}  & 88.632  & 4.815  & 25.769  & 0.1769 \tabularnewline
 & 0.3  & 48.268  & 10.277  & 19.775  & \textbf{0.5653}  & 52.116  & 8.736  & 19.581  & 0.5460 \tabularnewline
 & 0.5  & 15.896  & 6.939  & 9.178  & \textbf{0.8208}  & 19.848  & 6.372  & 9.741  & 0.8028 \tabularnewline
 & 0.7  & 1.228  & 3.123  & 2.649  & \textbf{0.9505}  & 3.004  & 6.123  & 5.343  & 0.9018 \tabularnewline
 & 1  & 0  & 0.489  & 0.367  & \textbf{0.9929}  & 0  & 3.145  & 2.359  & 0.956 \tabularnewline
 & 1.25  & 0  & 0.371  & 0.278  & \textbf{0.9946}  & 0  & 1.876  & 1.407  & 0.9732 \tabularnewline
 & 1.5  & 0  & 0.368  & 0.276  & \textbf{0.9947}  & 0  & 1.161  & 0.871  & 0.9832 \tabularnewline
\end{tabular}\caption{Diagnosis simulation results for Scenario I \label{tab:Diag_compare_sen1} }
\end{table}

\begin{figure}[!htb]
\centering \subfloat[PS=0.10]{\includegraphics[width=0.45\textwidth]{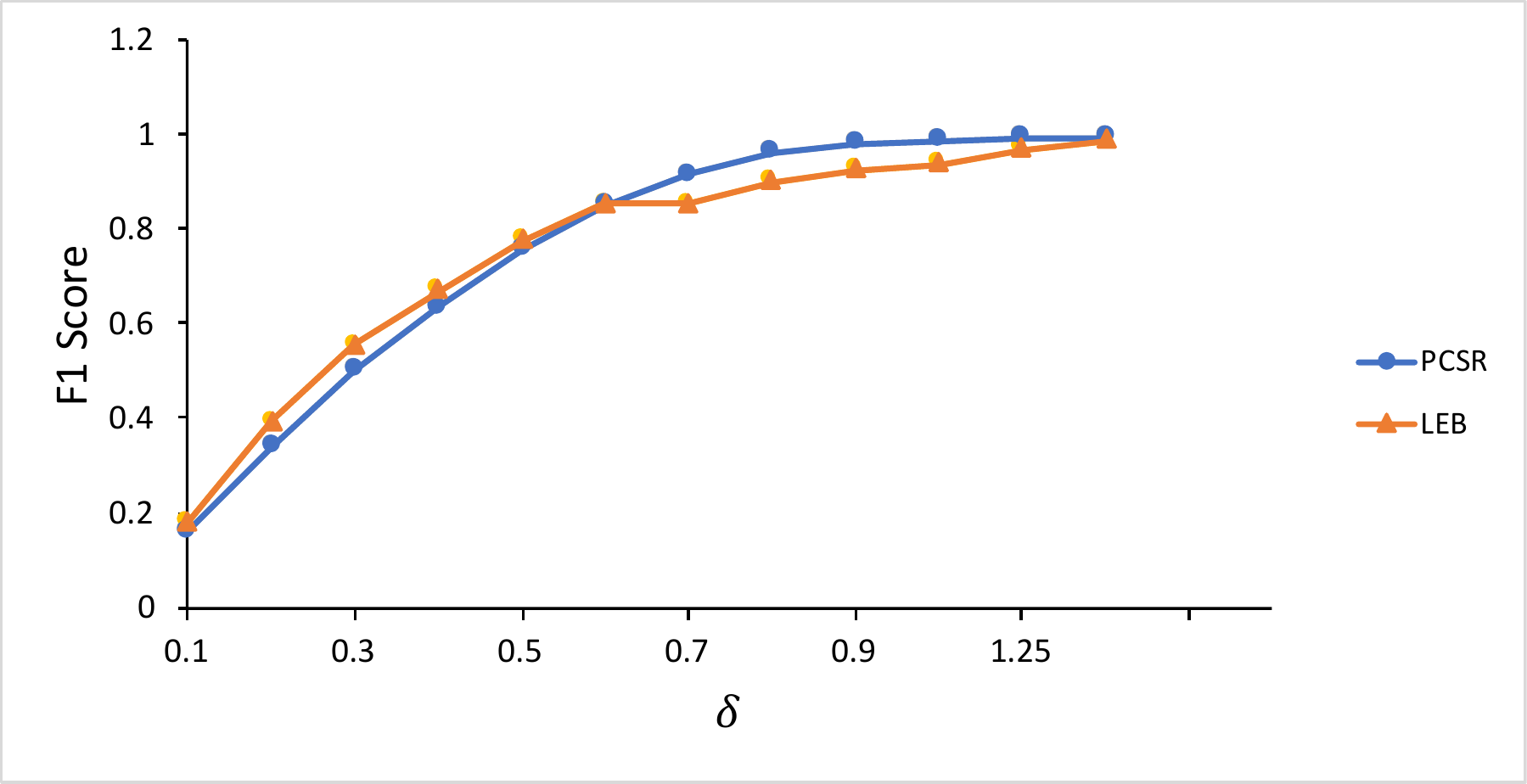}}
\vspace{0.05\linewidth}
\subfloat[PS=0.15]{\includegraphics[width=0.45\textwidth]{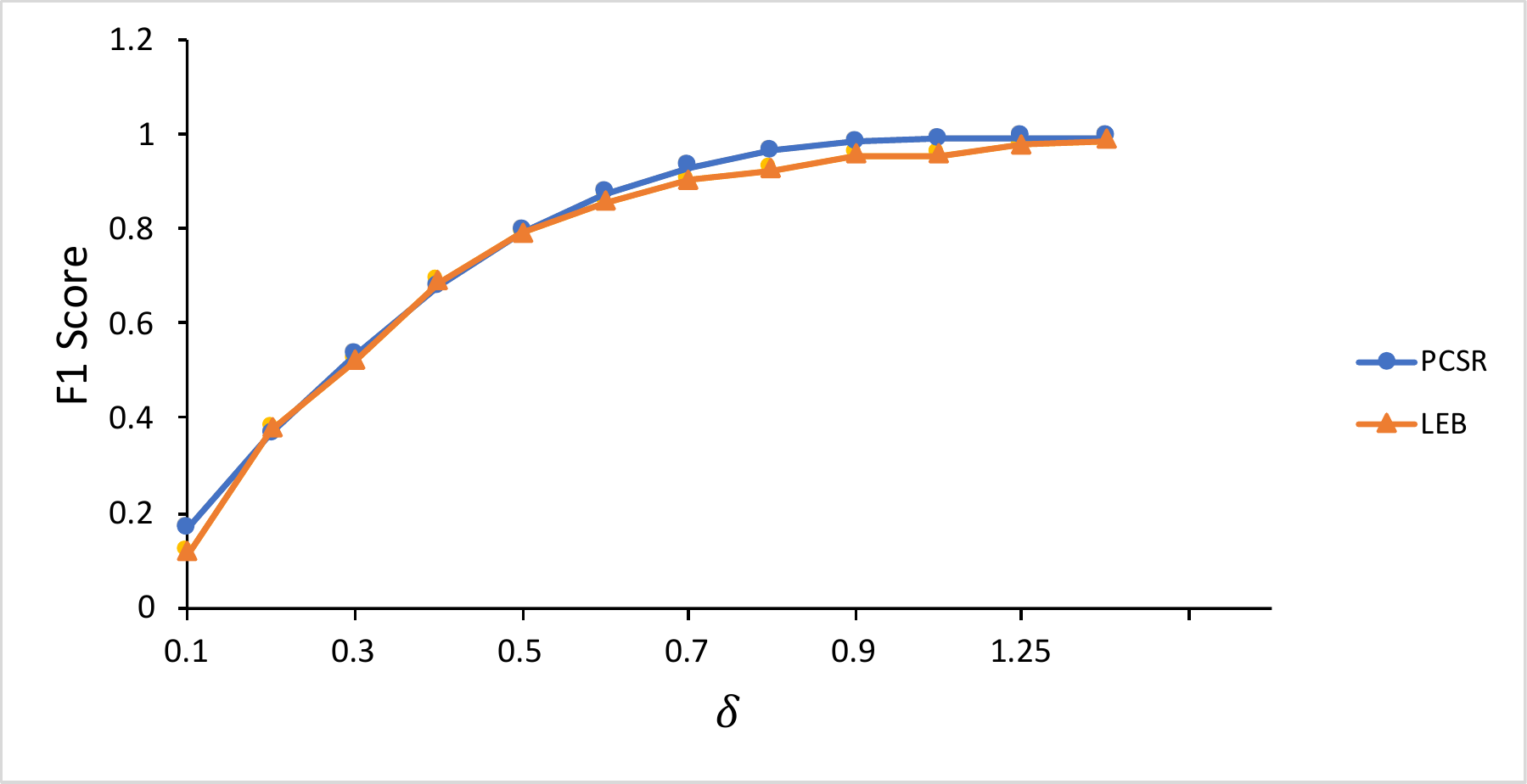}}
\hspace{0.5mm} \subfloat[PS=0.25]{\includegraphics[width=0.45\linewidth]{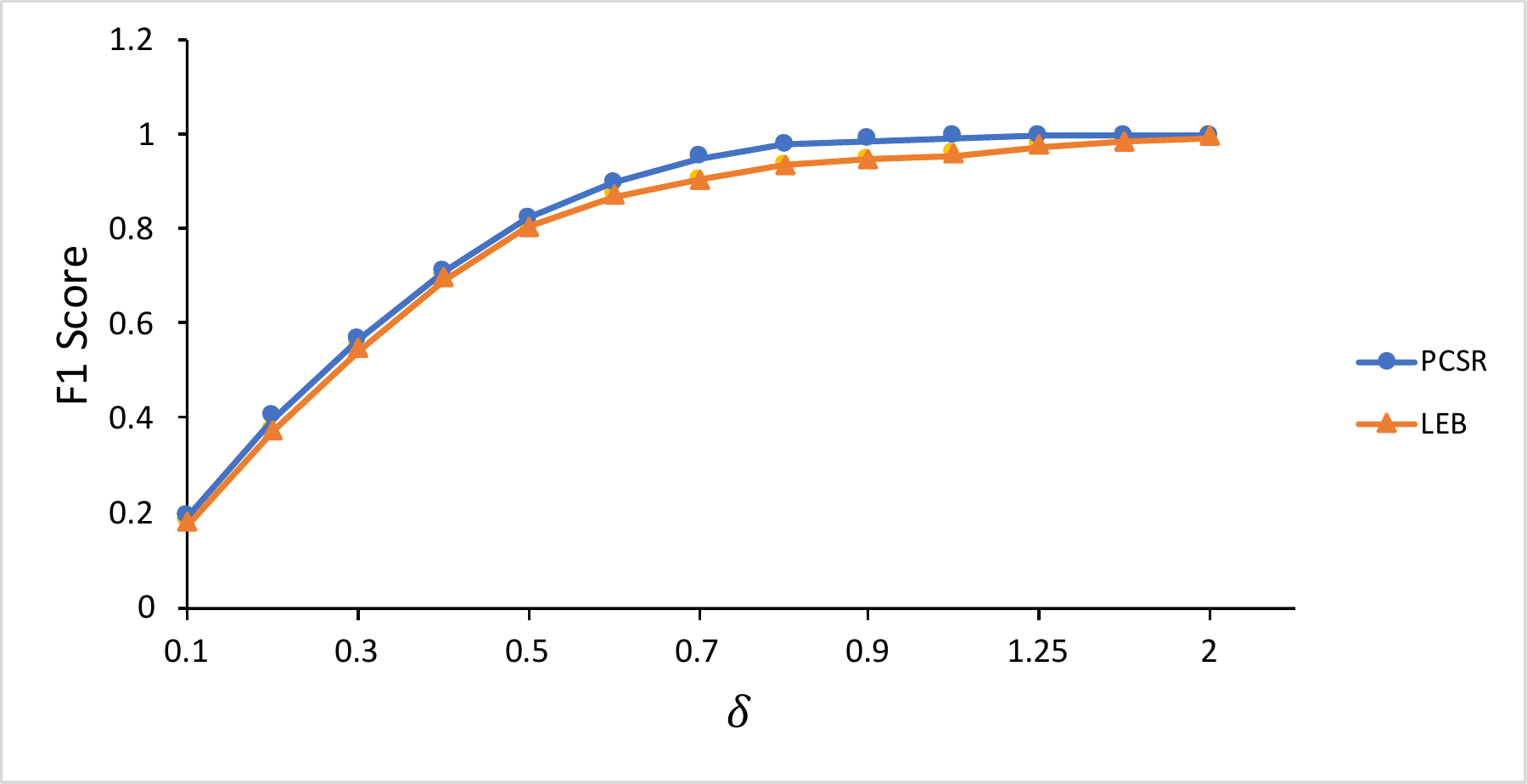}}
\caption{F1 of scenarios I different values of $\delta$ (shift magnitude)
}
\label{fig:F1_sen1_compare} 
\end{figure}


\begin{table}[!htb]
\centering %
\begin{tabular}{cc|cccc|cccc}
\hline 
\multicolumn{1}{c}{\textbf{PS}} & \multicolumn{1}{c}{\textbf{Shift}} & \multicolumn{4}{c}{\textbf{PCSR}} & \multicolumn{4}{c}{\textbf{LEB}}\tabularnewline
\hline 
\multicolumn{1}{c}{} & \multicolumn{1}{c|}{} & \multicolumn{1}{c}{\textbf{\%FP}} & \multicolumn{1}{c}{\textbf{\%FN}} & \multicolumn{1}{c}{\textbf{PSS}} & \multicolumn{1}{c|}{\textbf{F1}} & \multicolumn{1}{c}{\textbf{\%FP}} & \multicolumn{1}{c}{\textbf{\%FN}} & \multicolumn{1}{c}{\textbf{PSS}} & \multicolumn{1}{c}{\textbf{F1}}\tabularnewline
\multirow{7}{*}{\rotatebox[origin=c]{90}{\textbf{PS=0.10}}}  & 0.1  & 93.775  & 0.40435  & 7.874  & 0.1011  & 93.663  & 0.91304  & 8.333  & \textbf{0.1049} \tabularnewline
 & 0.3  & 78.412  & 0.5337  & 6.764  & \textbf{0.3257}  & 81.562  & 0.93478  & 7.385  & 0.2784 \tabularnewline
 & 0.5  & 63.1  & 0.75652  & 5.744  & \textbf{0.4854}  & 76.987  & 0.81848  & 6.912  & 0.3333 \tabularnewline
 & 0.7  & 40.387  & 1.0446  & 4.192  & \textbf{0.6802}  & 72.162  & 0.96848  & 6.664  & 0.3725 \tabularnewline
 & 1  & 8.575  & 1.2109  & 1.8  & \textbf{0.8875}  & 33.562  & 2.4522  & 4.941  & 0.6672 \tabularnewline
 & 1.25  & 0.7375  & 0.71413  & 0.716  & \textbf{0.9586}  & 12.425  & 3.5022  & 4.216  & 0.7664 \tabularnewline
 & 1.5  & 0  & 0.40217  & 0.37  & \textbf{0.9789}  & 0.6625  & 3.612  & 3.376  & 0.8302 \tabularnewline
\hline 
\multirow{7}{*}{\rotatebox[origin=c]{90}{\textbf{PS=0.15}}}  & 0.1  & 93.537  & 0.50714  & 15.392  & 0.1119  & 93.419  & 0.75476  & 15.581  & \textbf{0.1160} \tabularnewline
 & 0.3  & 79.588  & 0.69405  & 13.317  & \textbf{0.3225}  & 84.9  & 1.1179  & 14.523  & 0.2475 \tabularnewline
 & 0.5  & 64.444  & 0.98333  & 11.137  & \textbf{0.4925}  & 77.825  & 1.3536  & 13.589  & 0.3355 \tabularnewline
 & 0.7  & 42.219  & 1.3167  & 7.861  & \textbf{0.6891}  & 68.863  & 1.7393  & 12.479  & 0.4258 \tabularnewline
 & 1  & 7.8375  & 1.519  & 2.53  & \textbf{0.9182}  & 39.062  & 2.9917  & 8.763  & 0.6712 \tabularnewline
 & 1.25  & 0.81875  & 1.0476  & 1.011  & \textbf{0.9696}  & 7.0625  & 5.8714  & 6.062  & 0.8282 \tabularnewline
 & 1.5  & 0.00625  & 0.52262  & 0.44  & \textbf{0.9869}  & 0.225  & 5.969  & 5.05  & 0.8662 \tabularnewline
\hline 
\multirow{7}{*}{\rotatebox[origin=c]{90}{\textbf{PS=0.25}}}  & 0.1  & 95.088  & 0.56447  & 23.25  & \textbf{0.0878}  & 96.525  & 0.87105  & 23.828  & 0.0641 \tabularnewline
 & 0.3  & 79.483  & 1.0329  & 19.861  & \textbf{0.3251}  & 84.133  & 1.5408  & 21.363  & 0.2593 \tabularnewline
 & 0.5  & 59.008  & 1.7592  & 15.499  & \textbf{0.5489}  & 80.083  & 1.3763  & 20.266  & 0.3140 \tabularnewline
 & 0.7  & 36.492  & 1.7855  & 10.115  & \textbf{0.7440}  & 64.933  & 2.2645  & 17.305  & 0.4742 \tabularnewline
 & 1  & 6.4167  & 1.8013  & 2.909  & \textbf{0.9379}  & 15.758  & 6.4645  & 8.695  & 0.8180 \tabularnewline
 & 1.25  & 0.57083  & 1.1645  & 1.022  & \textbf{0.9792}  & 2.625  & 7.0224  & 5.967  & 0.8878 \tabularnewline
 & 1.5  & 0.0083333  & 0.62895  & 0.48  & \textbf{0.9903}  & 0.1  & 7.1421  & 5.452  & 0.8993 \tabularnewline
\end{tabular}\caption{Diagnosis simulation results for Scenario II \label{tab:Diag_compare_sen2} }
\end{table}

\begin{figure}[!htb]
\centering \subfloat[PS=0.10]{\includegraphics[width=0.45\textwidth]{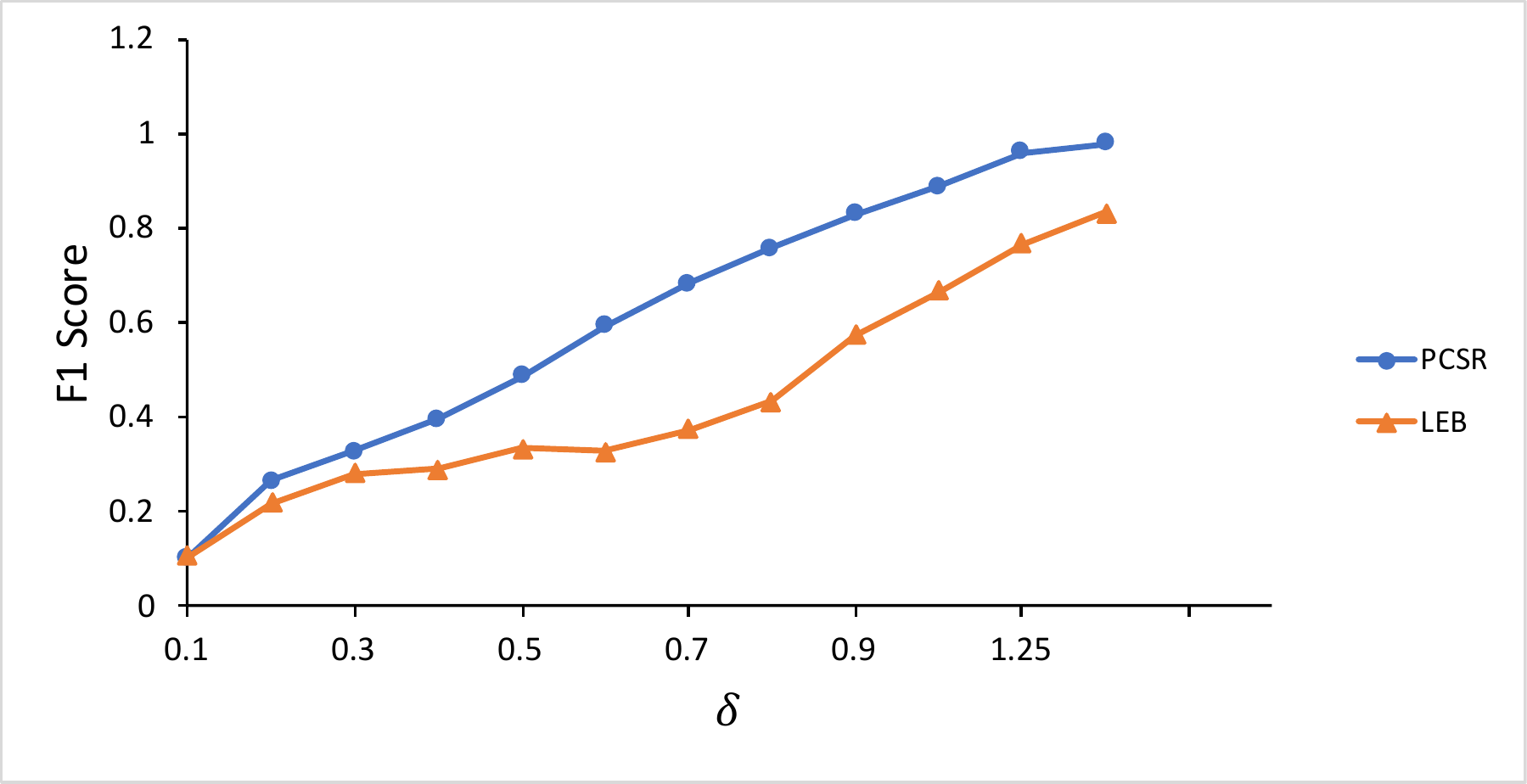}}
\vspace{0.05\linewidth}
\subfloat[PS=0.15]{\includegraphics[width=0.45\textwidth]{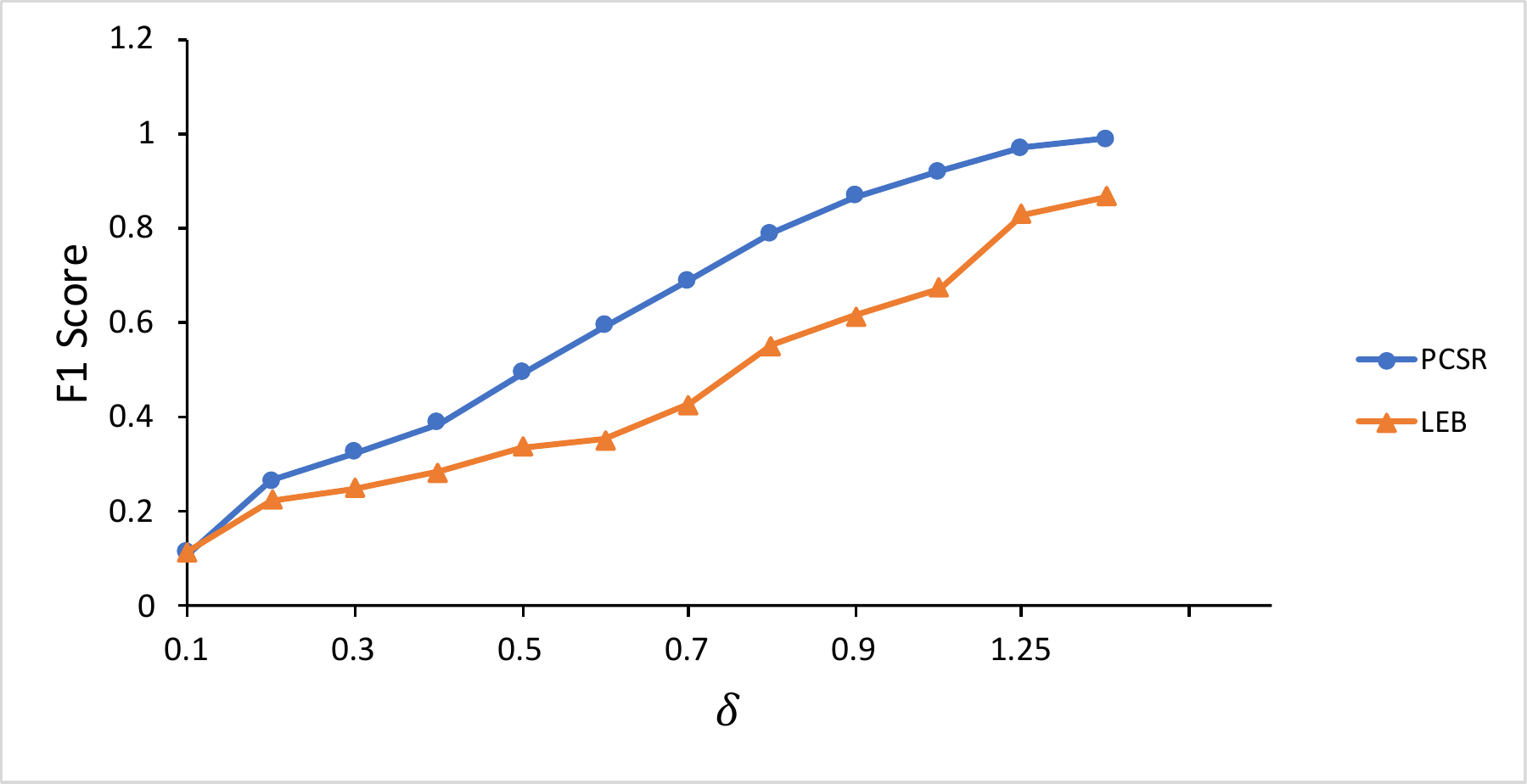}}
\hspace{0.5mm} \subfloat[PS=0.25]{\includegraphics[width=0.45\linewidth]{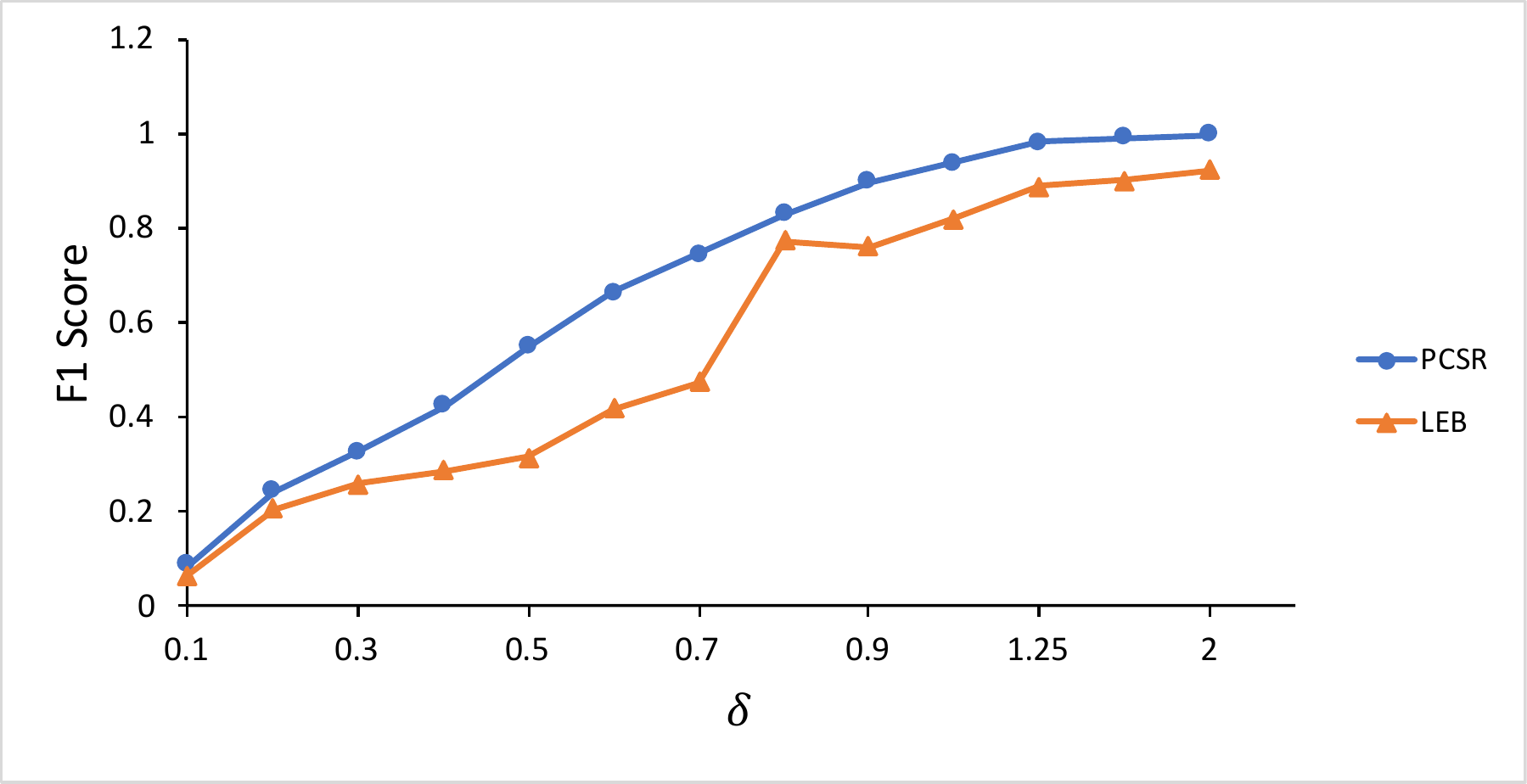}}
\caption{F1 of scenarios II different values of $\delta$ (shift magnitude)
}
\label{fig:F1_sen2_compare} 
\end{figure}


\begin{table}[!htb]
\centering %
\begin{tabular}{cc|cccc|cccc}
\hline 
\multicolumn{1}{c}{} & \multicolumn{1}{c}{\textbf{Shift}} & \multicolumn{4}{c}{\textbf{PCSR}} & \multicolumn{4}{c}{\textbf{LEB}}\tabularnewline
\hline 
 &   & \textbf{\%FP}  & \textbf{\%FN}  & \textbf{PSS}  & \textbf{F1}  & \textbf{\%FP}  & \textbf{\%FN}  & \textbf{PSS}  & \textbf{F1} \tabularnewline
\multirow{7}{*}{\rotatebox[origin=c]{90}{\textbf{PS=0.10}}}  & 0.1  & 98.48  & 0.63667  & 10.421  & 0.0257  & 97.31  & 0.87222  & 10.516  & \textbf{0.0484} \tabularnewline
 & 0.3  & 82.78  & 0.85778  & 9.05  & \textbf{0.2562}  & 92.08  & 0.47667  & 9.637  & 0.1385 \tabularnewline
 & 0.5  & 38.18  & 1.1511  & 4.854  & \textbf{0.7073}  & 73.69  & 0.40889  & 7.737  & 0.3675 \tabularnewline
 & 0.7  & 7.44  & 1.0178  & 1.66  & \textbf{0.9176}  & 17.54  & 1.4233  & 3.035  & 0.8303 \tabularnewline
 & 1  & 0.11  & 0.61556  & 0.565  & \textbf{0.9740}  & 0.38  & 0.94444  & 0.888  & 0.9595 \tabularnewline
 & 1.25  & 0  & 0.60111  & 0.541  & \textbf{0.9752}  & 0.02  & 0.67  & 0.605  & 0.9722 \tabularnewline
 & 1.5  & 0  & 0.46889  & 0.422  & \textbf{0.9806}  & 0  & 0.62444  & 0.562  & 0.9742 \tabularnewline
\hline 
\multirow{7}{*}{\rotatebox[origin=c]{90}{\textbf{PS=0.15}}}  & 0.1  & 98.347  & 0.6259  & 15.284  & 0.0293  & 97.513  & 0.84824  & 15.348  & \textbf{0.0460} \tabularnewline
 & 0.3  & 81.48  & 1.0471  & 13.112  & \textbf{0.2832}  & 91.853  & 0.31059  & 14.042  & 0.1431 \tabularnewline
 & 0.5  & 35.48  & 1.4965  & 6.594  & \textbf{0.7390}  & 56.067  & 0.77882  & 9.072  & 0.5457 \tabularnewline
 & 0.7  & 6.5533  & 1.2553  & 2.05  & \textbf{0.9319}  & 10.033  & 1.9471  & 3.16  & 0.8924 \tabularnewline
 & 1  & 0.11333  & 0.64706  & 0.567  & \textbf{0.9821}  & 0.19333  & 1.6247  & 1.41  & 0.9567 \tabularnewline
 & 1.25  & 0  & 0.60824  & 0.517  & \textbf{0.9837}  & 0  & 0.96471  & 0.82  & 0.9744 \tabularnewline
 & 1.5  & 0  & 0.60941  & 0.518  & \textbf{0.9836}  & 0  & 0.68  & 0.578  & 0.9818 \tabularnewline
\hline 
\multirow{7}{*}{\rotatebox[origin=c]{90}{\textbf{PS=0.25}}}  & 0.1  & 98.564  & 0.64533  & 25.125  & 0.0264  & 98.312  & 0.86  & 25.223  & \textbf{0.0323} \tabularnewline
 & 0.3  & 83.508  & 1.184  & 21.765  & \textbf{0.2641}  & 93.74  & 0.272  & 23.639  & 0.1128 \tabularnewline
 & 0.5  & 40.056  & 1.86  & 11.409  & \textbf{0.7173}  & 63.888  & 1.124  & 16.815  & 0.4648 \tabularnewline
 & 0.7  & 8.856  & 1.604  & 3.417  & \textbf{0.9292}  & 13.84  & 3.1027  & 5.787  & 0.8776 \tabularnewline
 & 1  & 0.184  & 0.78  & 0.631  & \textbf{0.9878}  & 0.348  & 2.7707  & 2.165  & 0.9593 \tabularnewline
 & 1.25  & 0.004  & 0.65333  & 0.491  & \textbf{0.9905}  & 0.008  & 1.8747  & 1.408  & 0.9732 \tabularnewline
 & 1.5  & 0  & 0.60133  & 0.451  & \textbf{0.9913}  & 0  & 1.1693  & 0.877  & 0.9831 \tabularnewline
\end{tabular}\caption{Diagnosis simulation results for Scenario III \label{tab:Diag_compare_sen4} }
\end{table}

\begin{figure}[!htb]
\centering \subfloat[PS=0.10]{\includegraphics[width=0.45\textwidth]{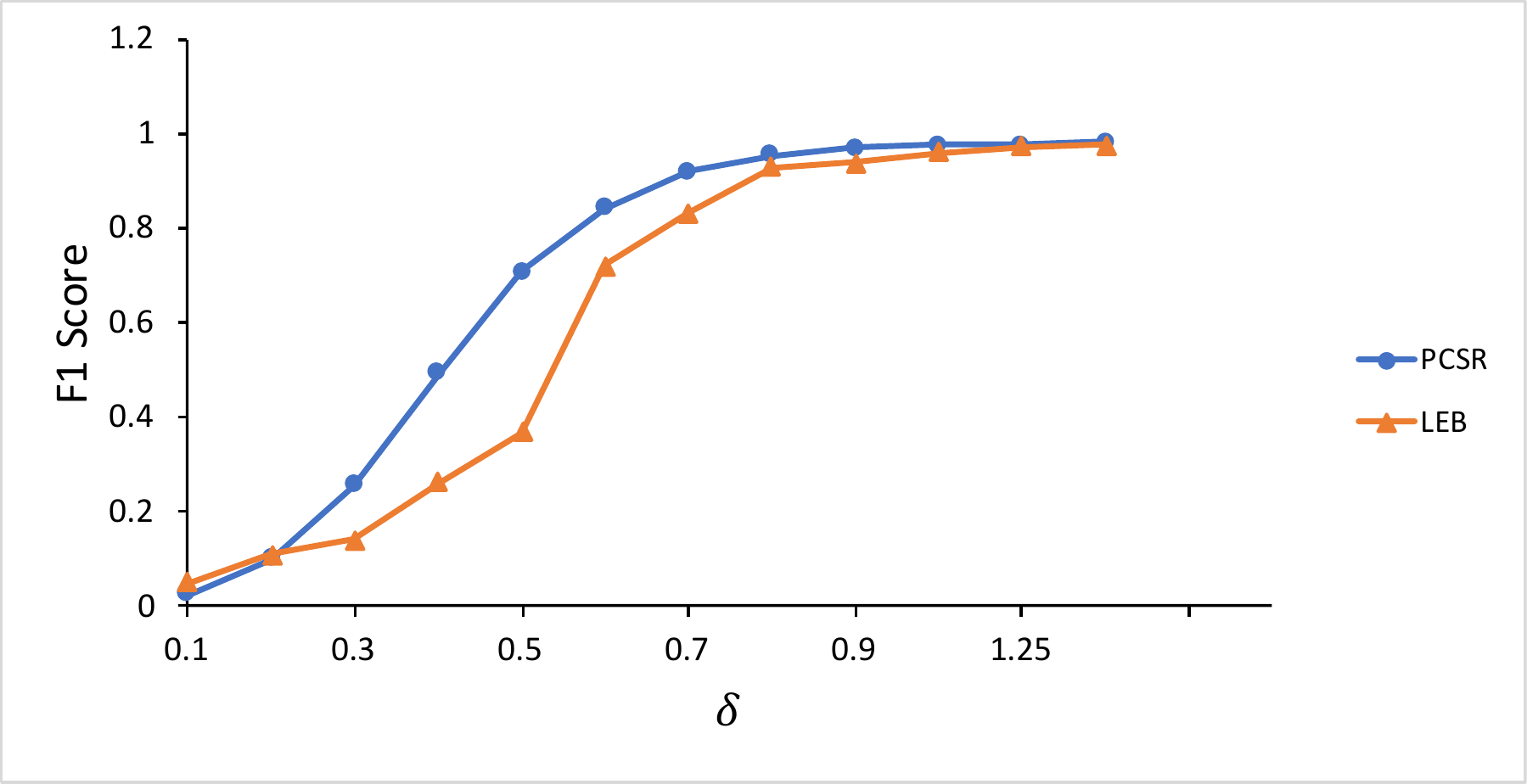}}
\vspace{0.05\linewidth}
\subfloat[PS=0.15]{\includegraphics[width=0.45\textwidth]{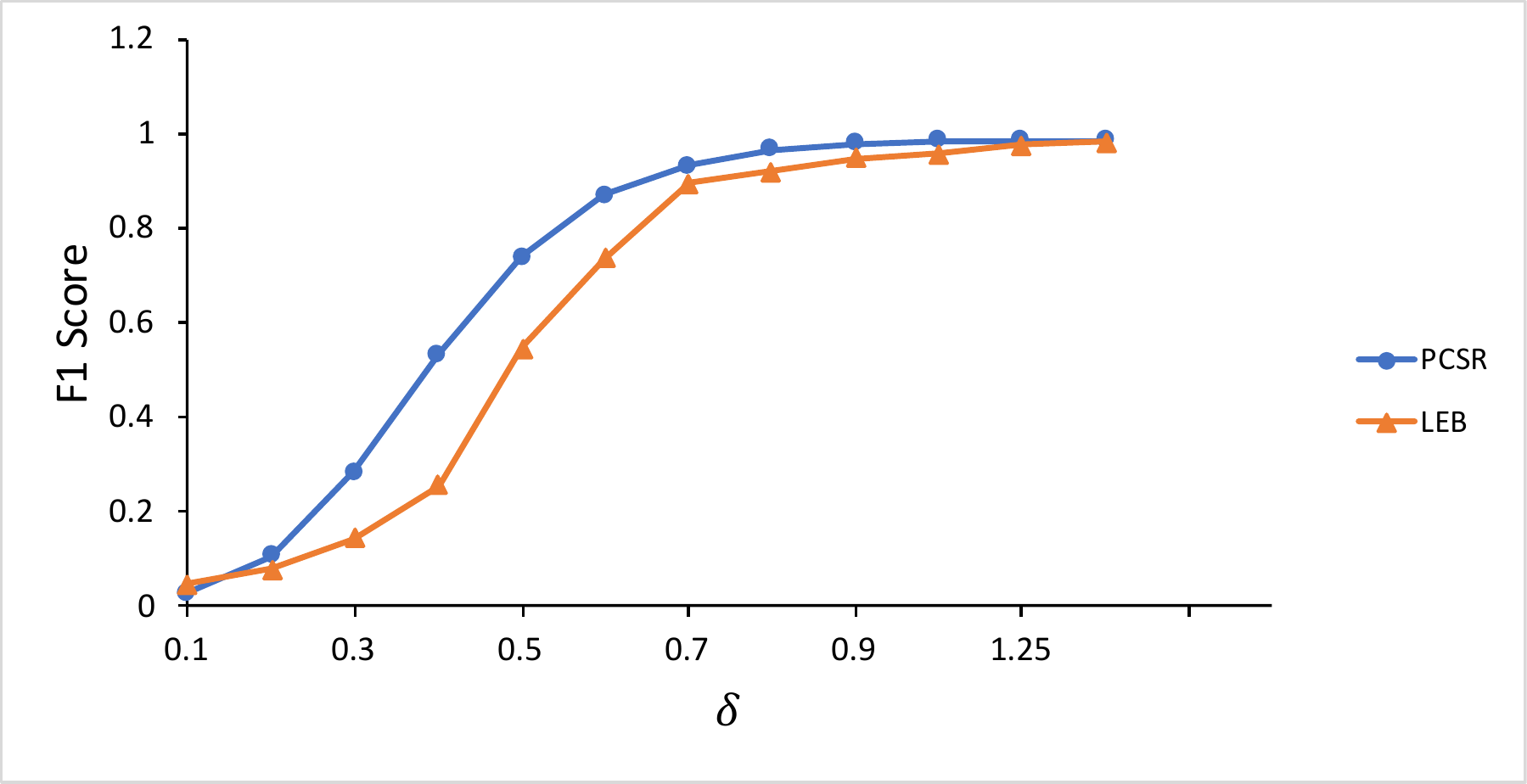}}
\hspace{0.5mm} \subfloat[PS=0.25]{\includegraphics[width=0.45\linewidth]{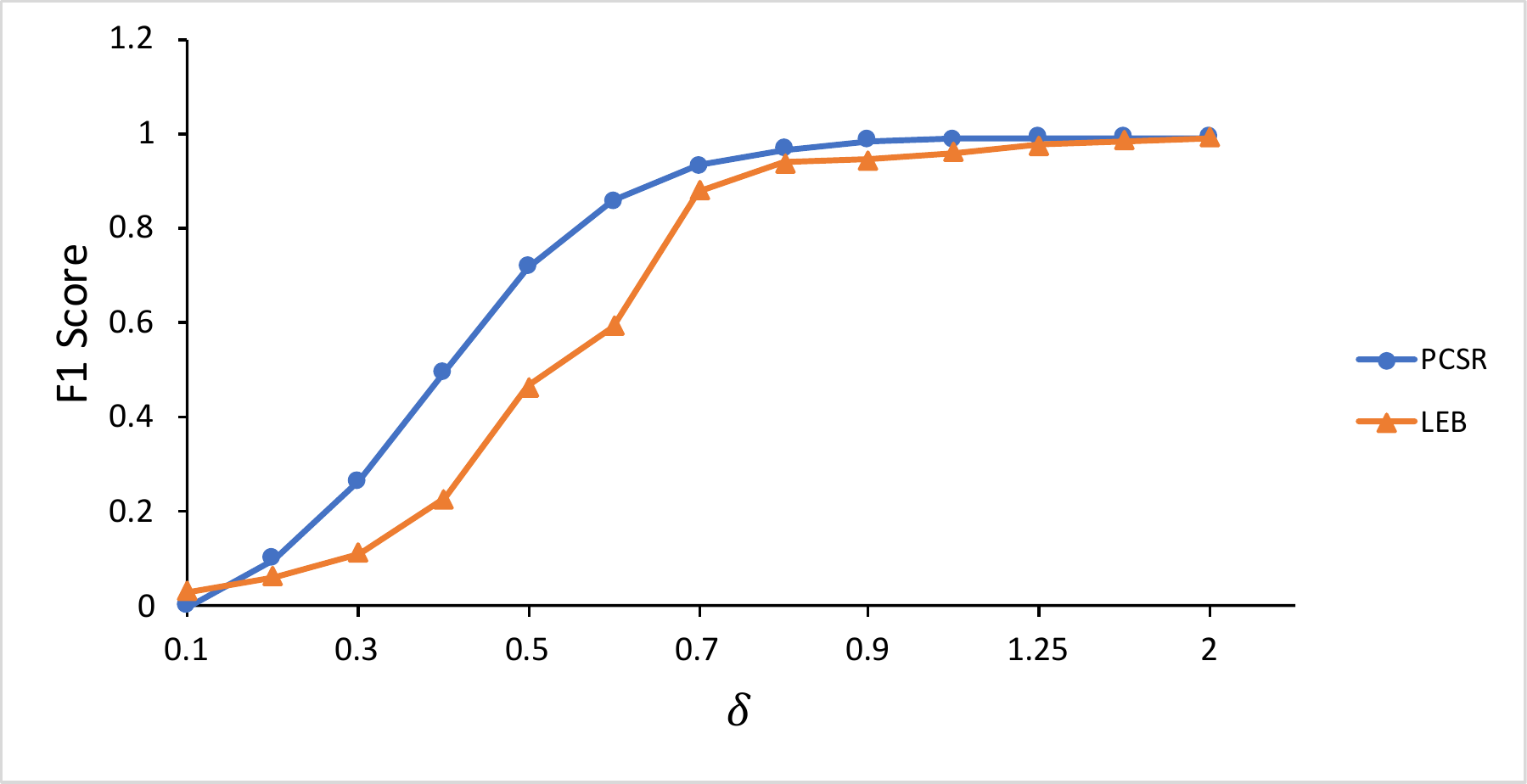}}
\caption{F1 of scenarios III different values of $\delta$ (shift magnitude)
}
\label{fig:F1_sen4_compare} 
\end{figure}

\section{Case Study \label{sec:Case-Study}}

In this section, we apply the proposed monitoring and diagnosis methods
on two case studies, a) defect detection in a steel rolling process,
and b) quality monitoring of wine. Additionally, we compare our results
with the existing methods.

\subsection{Defect detection in Steel Rolling Process}

Early detection of process shifts in a rolling process is necessary
to avoid damage to products and reduce manufacturing costs. Rolling
is a high-speed process that makes its monitoring particularly challenging.
In this study, we show that the PCA-based method can effectively
detect anomalies and damages imprinted on a steel bar after rolling.
The dataset we consider here, includes images of size 128$\times$512
pixels of the surface of rolled bars collected by a high-speed camera
\cite{yan2017anomaly}. Of the 100 images, the first 50 images are
in-control. One example of the image of rolling data for in-control vs
out-of-control process is shown in Figure~\ref{fig:rolling_image_beforeafter}.

We use this data to simulate an image with in-control observations
in the first 126 rows and out-of-control observations in the remaining
72 rows. The generated image is presented in Figure \ref{fig:generate_image}.
Also, we crop the image at the right end to avoid the non-informative dark segment of the image. Hence, our generated picture is of the size
of 198$\times$300. In this study, each row of an image (a vector
of 300$\times$1 ) is treated as an observation, creating a multi-stream data with the size of 300. As the picture shows, for out-of-control
observations, some small black lines, indicating anomalies, emerge
at the left part of the frame. We are interested to see whether our
monitoring approach can detect this change, and whether the diagnosis
approach can determine the changed pixels.

\begin{figure}
\centering{} %
\noindent\begin{minipage}[t]{0.31\paperwidth}%
\begin{center}
\subfloat[Rolling image at time 1]{\begin{centering}
\includegraphics[width=0.28\paperwidth]{Rolling1} 
\par\end{centering}
}
\par\end{center}%
\end{minipage}%
\noindent\begin{minipage}[t]{0.31\paperwidth}%
\begin{center}
\subfloat[Rolling image at time 90]{\centering{}\includegraphics[width=0.28\paperwidth]{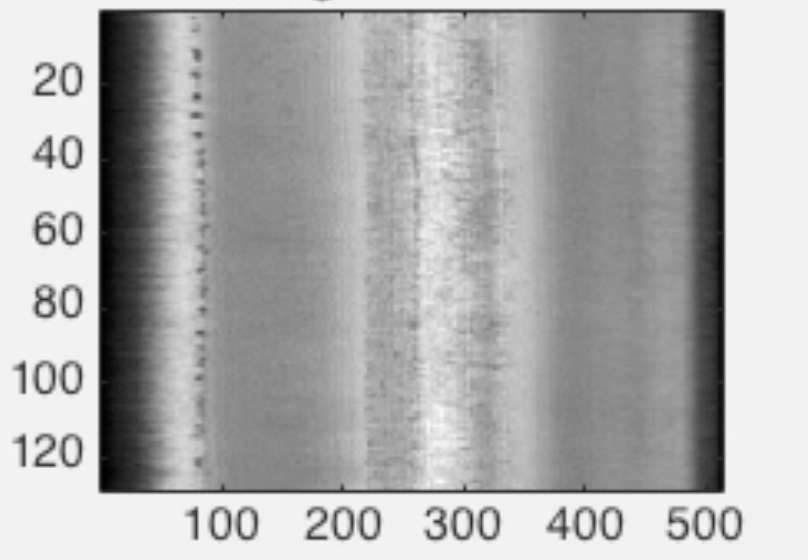}

}
\par\end{center}%
\end{minipage}\caption{Image of rolling data for in control process (a) and out of control
process (b)}
\label{fig:rolling_image_beforeafter} 
\end{figure}

\begin{figure}
\centering{} \includegraphics[width=0.6\paperwidth,height=0.2\textheight]{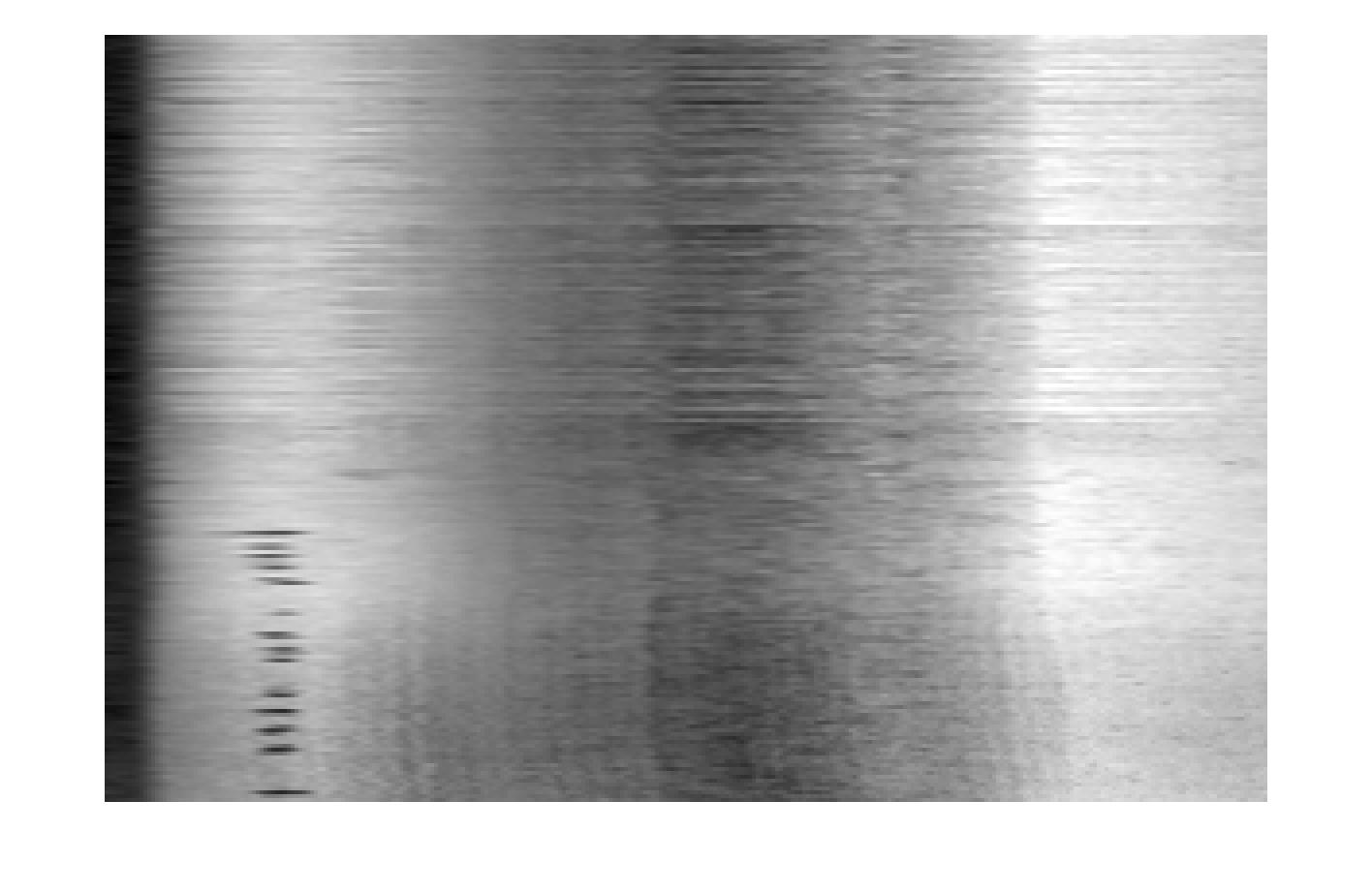}
\caption{Generated Image with first 126 rows (from the top) as in-control and remaining 72
rows as out-of-control \label{fig:generate_image}}
\end{figure}

\textbf{Monitoring}. We apply our proposed APC method for monitoring the process. We use
the first 70 in-control data (the first 70 rows of the image) to obtain
the control limits. The control limits are determined according to
the procedure explained in Sec. \ref{subsec:Monitoring-Methods-Analysis}
to achieve the in-control ARL of 200. The resulting control chart
is shown in Figure~\ref{fig:Monitoring-Rolling-Data}. As can be seen
from the figure, after the change point, our monitoring statistic
instantly inflates and raises an out-of-control alarm by the first
observation after the change. Furthermore, to compare its performance
with existing state-of-the-art methods, we report the run
lengths (the number of observations before the change is detected)
for each method in Table~\ref{tab:Comparison-of-different-1}. In the results, APC
has the smallest run-length (RL). This implies APC is the fastest in detecting
the change in comparison to other benchmarks.

\begin{figure}
\centering{} 
\includegraphics[width=0.8\paperwidth]{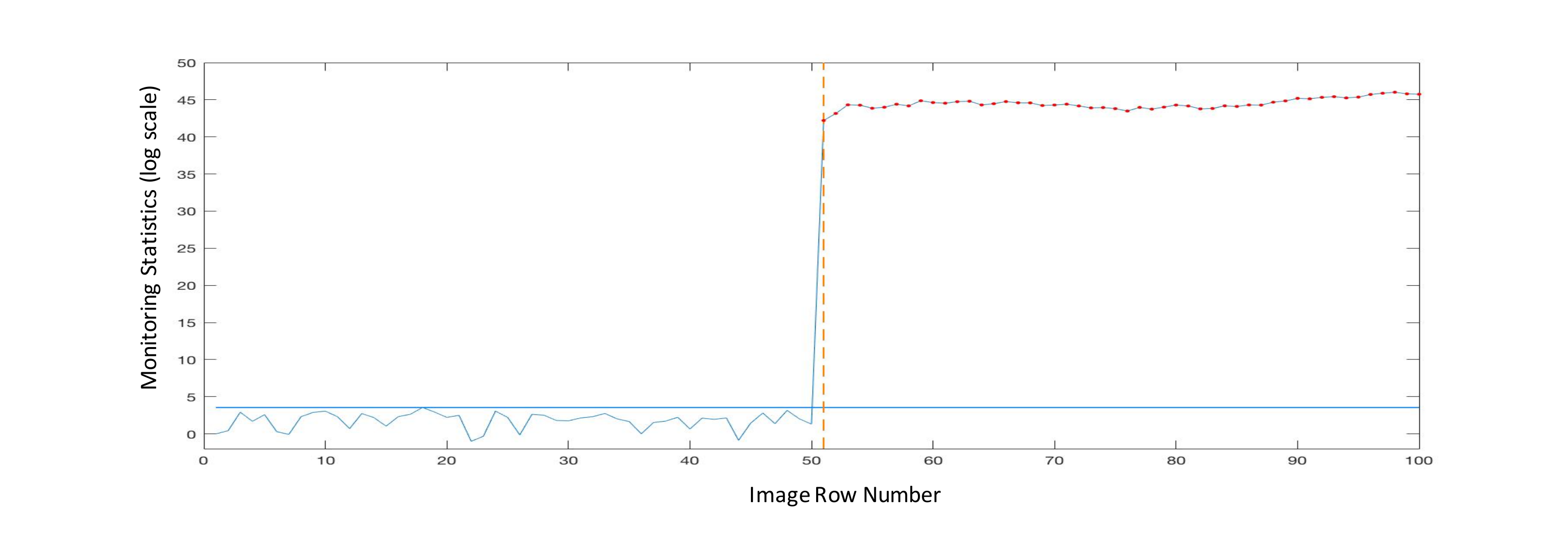}
\caption{Monitoring Rolling Data using APC Method \label{fig:Monitoring-Rolling-Data}}
\end{figure}

\begin{table}[t]
\centering %
\begin{tabular}{l|c}
\textbf{Method}  & \textbf{Detected Change Point} \tabularnewline
\hline 
APC  & \textbf{1} \tabularnewline
Conventional PCA  & 16 \tabularnewline
T\_new (~\cite{zou2015efficient})  & 10 \tabularnewline
TRAS (~\cite{liu2015adaptive})  & 14 \tabularnewline
\hline 
 & \tabularnewline
\end{tabular}\caption{Run length Comparison of different methods in detecting the change
point}
\label{tab:Comparison-of-different-1} 
\end{table}

\textbf{Diagnosis}. To check the performance of our PCSR method,
we performed diagnosis using our method vs LEB method on the out of
control data. The phase-1 data is used as the ground truth (sample size 70), and
25 out-of-control observations are used to detect the changed pixels
in the generated image. The area selected as out-of-control for each
method as well as the in-control and out-of-control images are shown
in Figure \ref{fig:rolling_diagnosis}. The identified pixels are shown in black and the remaining
unchanged pixels are shown in white in Figure \ref{fig:rolling_diagnosis}(c)
and (d), respectively. 

As the results show, PCSR method clearly detects
the changed pixels in the image with no false detection. Note that
although LEB can identify the changed pixels, it generates a few false
detection areas. 

\begin{figure}
\centering{}
\noindent\begin{minipage}[t]{0.61\paperwidth}%
\begin{center}
\subfloat[In-control image]{\begin{centering}
\includegraphics[width=0.61\paperwidth]{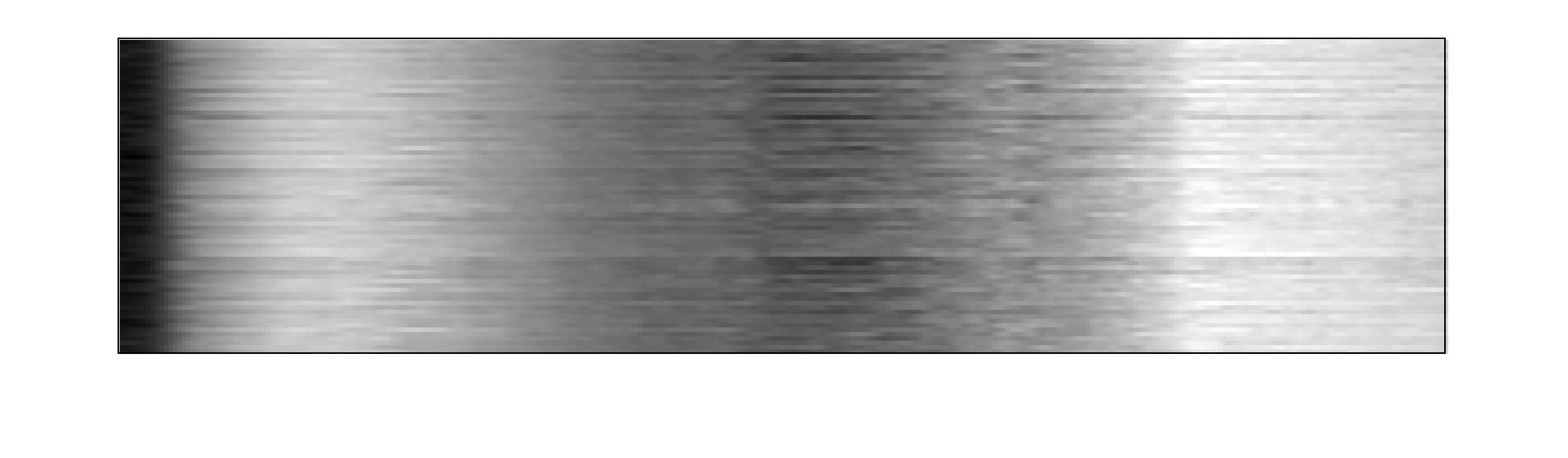}
\par\end{centering}
}
\par\end{center}%
\end{minipage}%
\\
\noindent\begin{minipage}[t]{0.61\paperwidth}%
\begin{center}
\subfloat[Out-of-control image]{\begin{centering}
\includegraphics[width=0.61\paperwidth]{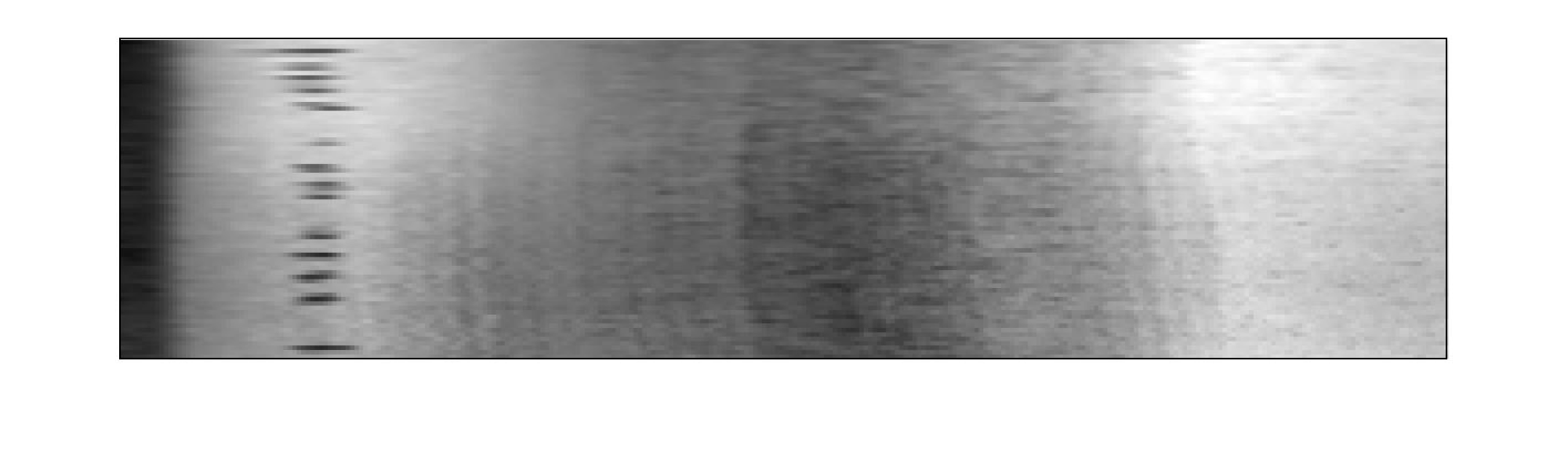}
\par\end{centering}
}
\par\end{center}%
\end{minipage}%
\\
\noindent\begin{minipage}[t]{0.61\paperwidth}%
\begin{center}
\subfloat[Diagnosis using PCSR]{\begin{centering}
\includegraphics[width=0.61\paperwidth]{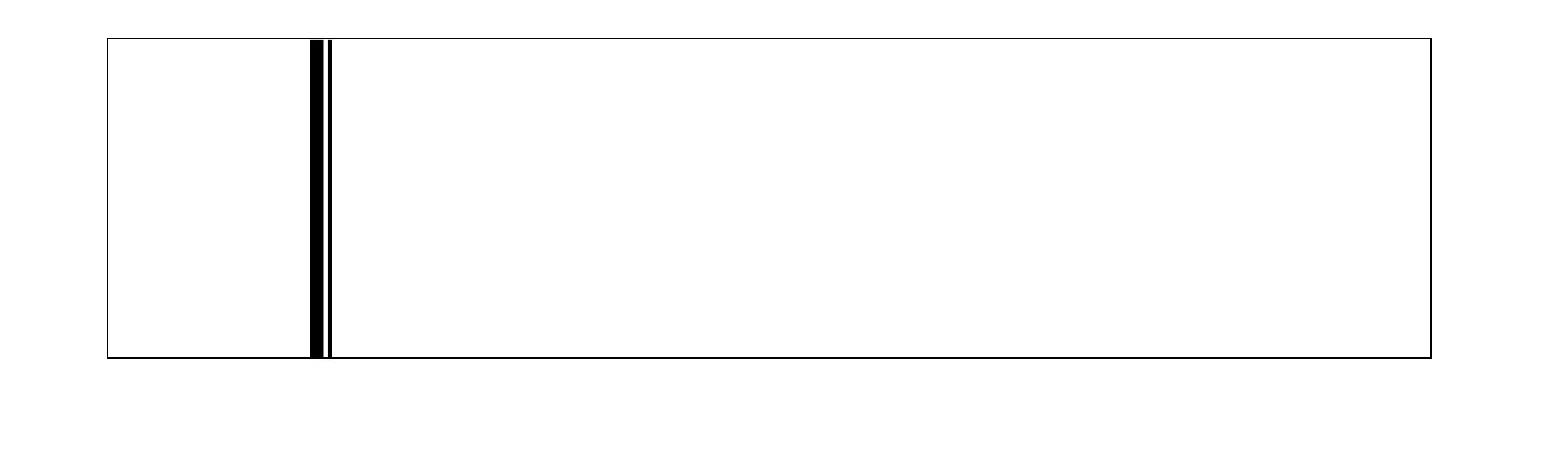}
\par\end{centering}
}
\par\end{center}%
\end{minipage}%
\\
\noindent\begin{minipage}[t]{0.61\paperwidth}%
\begin{center}
\subfloat[Diagnosis using LEB]
{\begin{centering}
\includegraphics[width=0.61\paperwidth]{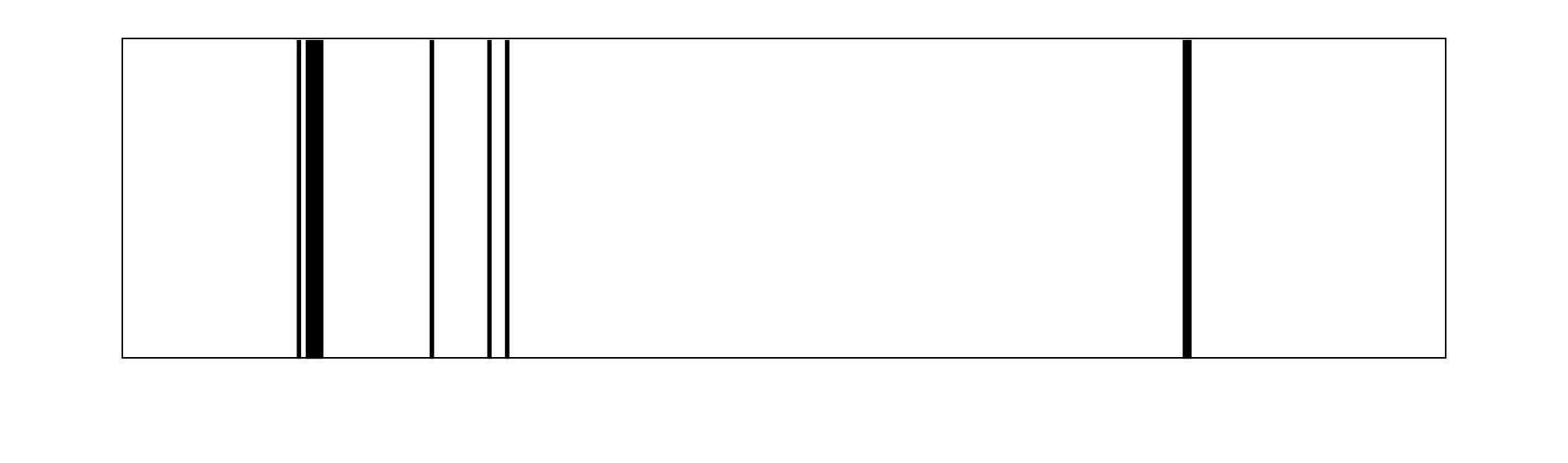}
\par\end{centering}
}
\par\end{center}%
\end{minipage}
\caption{Diagnosis using PCSR and LEB method}
\label{fig:rolling_diagnosis}
\end{figure}

\subsection{Wine Quality Monitoring}

In this section, we demonstrate the efficacy of our proposed methodology
by applying it to a real dataset from a white wine production process.
The data is taken from the UCI data repository \footnote{http://archive.ics.uci.edu/ml/datasets/Wine+Quality}.
The data has 4898 observations obtained between May 2004 to February
2007 for the purpose of improving the quality of Portuguese Vinho
Verde wine. The collected data has eleven variables named as fixed
acidity, volatile acidity, citric acid, residual sugar, chlorides,
free sulfur dioxide, total sulfur dioxide, density, PH, sulphates
and alcohol. An additional (manually annotated) quality
variable is available that will be used as ground truth for the
wine quality. This variable ranges between 0 (very bad) and 10 (very
excellent), and is provided based on sensory analysis \cite{cortez2009modeling}.

Our objective is to monitor the wine quality using
the variables and diagnose the shifted variables, if there is a shift.
We perform the APC study on this dataset. Similar to \cite{zou2015efficient}'s
study on this data,
we focus on a subset of the data in which the quality variable is
either 6 or 7. The observations with the quality of 7 are considered as
acceptable while the rest are unacceptable, hence out-of-control. The quality variable is the ground truth that is used to gauge the performance of our monitoring\textemdash the monitoring should raise an out-of-control alarm as soon as the quality variable is going down from 7 to 6. When the alarm is raised, our diagnosis approach should be able to pinpoint the actual shifted process variables.

\textbf{Monitoring}. Overall there are 880 observations with the quality equal to 7 of
which 830 observations are used for phase I monitoring. Also, we set
the control limits ($R_{0}$ in APC method) such that ARL for in-control
observation is 1000. To do the comparison, we implement our method
along with the existing methods shown in Sec. \ref{subsec:Monitoring-Methods-Analysis}.
All parameters in the methods are set to achieve in-control ARL of
1000 so that the methods are comparable.

For phase II monitoring, we use the remaining 40 points with the quality
of 7 followed by observations with the quality of 6. The goal is to investigate
how fast and accurately our monitoring algorithm detects the change
point in comparison to the existing methods.

The results are shown in Table~\ref{tab:Comparison-of-different} and Figure~\ref{fig:Monitoring-Wine-Quality}. As shown in Table ~\ref{tab:Comparison-of-different},
the APC monitoring method detects the change after $11$ observations.
On the other hand, other methods took more than twice as many observations to detect the change.

\begin{figure}[h]
\centering{} \includegraphics[width=1\linewidth]{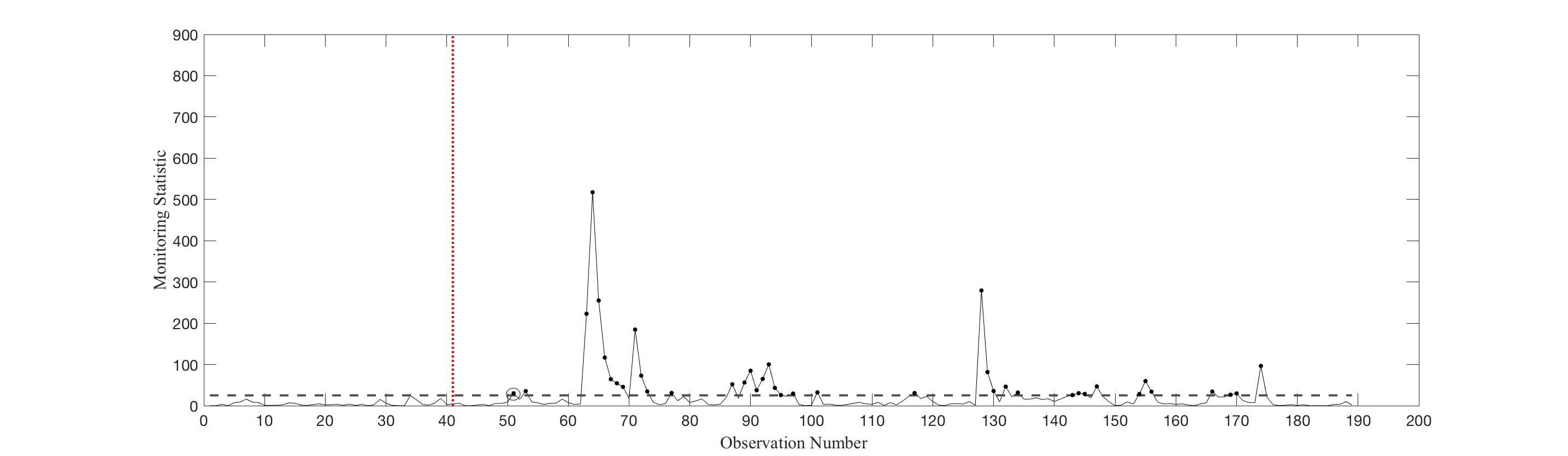}
\caption{Monitoring Wine Quality Data using APC Method \label{fig:Monitoring-Wine-Quality}}
\end{figure}

\begin{table}[t]
\centering %
\begin{tabular}{l|c}
\textbf{Method}  & \textbf{$N_{A}$}$=$Number of observations after Chang point until
alarm \tabularnewline
\hline 
APC  & \textbf{11} \tabularnewline
Conventional PCA  & 23 \tabularnewline
T\_new \cite{zou2015efficient}  & 24 \tabularnewline
TRAS \cite{liu2015adaptive}  & 28 \tabularnewline
\hline 
\end{tabular}\caption{Comparison of different methods in detecting the change \label{tab:Comparison-of-different} }
\end{table}

\textbf{Diagnosis}. 
Among the eleven variables, four were determined
as shifted variables by PCSR, viz. residual sugar, chlorides, density, and alcohol. 
The LEB method selected
chlorides, density and alcohol as the shifted variables. 

\section{Conclusion \label{sec:Conclusion-and-future}}

In this paper, we proposed an SPC framework for high-dimensional data
streams that seamlessly integrates monitoring and diagnostics. We
proposed a new PCA-based monitoring approaches, viz. Adaptive PC Selection
(APC) monitoring. We first negated the common belief that the high-PCs
(principal components with highest variances) should be used for
monitoring, and then, showed that monitoring adaptively selected
PCs will be more effective. 
Using simulations, we showed that adaptively selected PCs outperforms
other benchmark methods for different types of covariance matrix structures
and types of shifts. Moreover, in all the stated scenarios, the conventional
approach of monitoring high-PC was shown to have poorer performance.

In the diagnosis module, we first discussed the challenge in finding
the shifted variables after a PCA-based monitoring procedure. The
challenge lies in isolating the process variable from the signaling PC. 
To address this, we used the CS principle to formulate an adaptive Lasso estimation
to detect the shifted variables. This formulation takes the eigenvectors
and principal components (after a shift) as inputs and yields the
process variables that caused the shift. Our experimental validations
showed that the proposed PCSR performs significanlty better than the current state-of-the-art.

Furthermore, we showed the practical applicability and validity of
our methods via real-world case studies. The first case study was
on defect detection in a steel rolling process, in which we found
that the proposed APC detects the shift faster than all the other
methods. Moreover, the PCSR diagnosis approach detects the change
pixels better than the existing method with fewer false positives. In another case study, we
monitored wine quality and diagnosed the shift. 
Our monitoring approach was again faster and our diagnosis approach
could find an additional shifted process variable,\emph{ sugar}, that
was undetected by the existing diagnostics approach. 

In this paper, we
have focused on monitoring and diagnosing the mean shifts. While the
developed APC can potentially be used to detect shifts in covariances,
further research is required to extend the PCSR diagnostics approach
to the covariance matrix monitoring.

\bibliography{bibliography} 

\appendix
\section{First and second moments of the thresholded statistic}
\label{Ap:prop_dtilde_prove}
The first moment of $\tilde{d}$ is calculated as,
\begin{equation}
\label{eq:E_d_tilde}
\begin{aligned}
E(\tilde{d})&=E((d_{tj}-\nu)_{+})=E((d_{tj}-\nu)|d_{tj}>\nu)P(d_{tj}>\nu)\\
&=E((d_{tj}|d_{tj}>\nu)p(d_{tj}>\nu)-\nu P(d_{tj}>\nu)\\
&=\int_{\nu}^{\infty}d P(d)-\nu P(\chi_{1}^2>\nu)\\
&=\int_{\nu}^{\infty}\{x \frac{1}{\sqrt{2}\Gamma(0.5)}x^{-\frac{1}{2}}e^{-\frac{x}{2}}\} dx-\nu P(\chi_{1}^2>\nu)\\
&=\frac{1}{\Gamma(0.5)}\{\Gamma(0.5,\frac{\nu}{2})+e^{-\frac{\nu}{2}}\sqrt{2\nu}\}-\nu P(\chi_{1}^{2}>\nu)
\end{aligned}
\end{equation}

To calculate the second moment of $\tilde{d}$,
\begin{equation}
\begin{aligned}
E(\tilde{d}^{2})&=E((d_{tj}-\nu)_{+}^{2})=E((d_{tj}-\nu)_{+}^{2})|d_{tj}>\nu) P(d_{tj}>\nu)\\
=&E(d_{tj}^{2}|d_{tj}>\nu) P(d_{tj}>\nu)-2\nu E(d_{tj}|d_{tj}>\nu) P(d_{tj}>\nu)+\nu^2 P(d_{tj}>\nu) \overset{\mathrm{Eq. \ref{eq:E_d_tilde}}}{\rightarrow}\\
=&\int_{\nu}^{\infty}\{x^2 \frac{1}{\sqrt{2}\Gamma(0.5)}x^{-\frac{1}{2}}e^{-\frac{x}{2}}\} dx-2\nu \{E(\tilde{d})+\nu P(\chi_{1}^{2}>\nu)\}+\nu^{2}P(\chi_{1}^{2}>\nu)\\
=&\int_{\nu}^{\infty}\{x^2 \frac{1}{\sqrt{2}\Gamma(0.5)}x^{-\frac{1}{2}}e^{-\frac{x}{2}}\} dx-2\nu E(\tilde{d})-\nu^{2}P(\chi_{1}^{2}>\nu)\\
=&\frac{1}{\Gamma(0.5)}[3\Gamma(0.5,\frac{\nu}{2})+e^{-\frac{\nu}{2}}\sqrt{2\nu}(3+\nu)]-2\nu E(\tilde{d})-\nu^{2}P(\chi_{1}^{2}>\nu)
\end{aligned}
\end{equation}

\section{Consistency of the Diagnosis Method}
\label{Ap:consistency}
To prove the consistency of our diagnosis model, we use the derivations in \cite{zou2006adaptive}. To show the consistency in adaptive lasso \citeauthor{zou2006adaptive} used the following conditions,  
\begin{condition}\label{cond_1}
noise have independent identical distribution with mean 0 and variance $\sigma^2$
\end{condition}
\begin{condition}\label{cond_2}
for observation matrix \textbf{X},and number of observations n, $\frac{1}{n} \textbf{X}^T \textbf{X}\rightarrow \textbf{C}$. Where $\textbf{C}$ is a positive definite matrix.
\end{condition}

Condition~\ref{cond_1} is valid for Eq.~\ref{eq:lasso_transform}. As we showed, after the transformations, the model noise has iid distribution with variance equal to 1.
To show the validity of condition~\ref{cond_2} we need to show that this consition holds for $\textbf{A}^*$ instead of $\textbf{X}$. We use lemma~\ref{lem:positive} and its proof to show it.

\begin{lemma}
For $\textbf{A}^*$ given in Eq.~\ref{eq:weightedL1} , $\textbf{C}=\frac{1}{p} {\textbf{A}^*}^T \textbf{A}^*$ is a positive definite matrix
\label{lem:positive}
\end{lemma}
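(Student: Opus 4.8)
The plan is to exploit the orthogonality of the eigenvector matrix $A$ together with the strict positivity of the eigenvalues $\lambda_1,\dots,\lambda_p$ (which is implicit throughout the paper, since $\Lambda^{-1/2}$ appears in the transformation~\ref{eq:lasso_transform}), and thereby reduce the claim to the elementary fact that a congruence transformation of a positive definite matrix is positive definite.

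First I would recall that, because $\Sigma$ is symmetric (and here assumed nonsingular), its eigenvector matrix satisfies $A^{T}A = AA^{T} = I$, so $A$ is invertible, and $\Lambda = \mathrm{diag}(\lambda_{1},\dots,\lambda_{p})$ has all diagonal entries $\lambda_{j}>0$; hence $\Lambda^{-1}$ and $\Lambda^{-1/2}$ are well-defined diagonal matrices with strictly positive diagonal entries. Substituting $A^{*} = \Lambda^{-1/2}A$ into the definition of $C$ gives
\[
C \;=\; \frac{1}{p}\,(A^{*})^{T}A^{*} \;=\; \frac{1}{p}\,A^{T}\Lambda^{-1/2}\Lambda^{-1/2}A \;=\; \frac{1}{p}\,A^{T}\Lambda^{-1}A .
\]

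Next, symmetry of $C$ is immediate, since $\Lambda^{-1}$ is symmetric: $C^{T} = \frac{1}{p}A^{T}(\Lambda^{-1})^{T}A = C$. For positive definiteness, I would take an arbitrary nonzero $v\in\mathbb{R}^{p}$ and compute
\[
v^{T}Cv \;=\; \frac{1}{p}\,(Av)^{T}\Lambda^{-1}(Av) \;=\; \frac{1}{p}\,\| \Lambda^{-1/2}Av \|_{2}^{2} \;=\; \frac{1}{p}\sum_{j=1}^{p}\frac{(Av)_{j}^{2}}{\lambda_{j}} .
\]
Since $A$ is invertible and $v\neq 0$, we have $Av\neq 0$; and since every $\lambda_{j}>0$, every summand is nonnegative with at least one strictly positive, so $v^{T}Cv>0$. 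This establishes that $C$ is symmetric positive definite, as claimed, and in particular identifies it as the limiting matrix $\mathbf{C}$ required in Condition~\ref{cond_2} (with the role of $n$ played by $p$, since $\frac{1}{p}(A^{*})^{T}A^{*}$ is already exactly this constant matrix).

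There is essentially no substantive obstacle here; the only point that deserves care is that the argument silently requires $\Sigma$ to be nonsingular (equivalently $\lambda_{p}>0$), which is already presumed by the appearance of $\Lambda^{-1/2}$ in Eq.~\ref{eq:lasso_transform}. If one wished to handle a degenerate covariance one would restrict the quadratic form to the range of $\Sigma$, but under the standing assumptions of the paper the statement holds verbatim.
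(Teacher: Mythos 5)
Your argument is essentially the same as the paper's: substitute $A^{*}=\Lambda^{-1/2}A$ to write $C=\frac{1}{p}A^{T}\Lambda^{-1}A$, then show $v^{T}Cv=\frac{1}{p}\sum_{j}(Av)_{j}^{2}/\lambda_{j}>0$ for $v\neq 0$. You are in fact slightly more careful than the paper, which leaves implicit that $A$ is invertible (so $Av\neq 0$) and that positive definiteness rests on the eigenvalues $\lambda_{j}$ being strictly positive, i.e.\ $\Sigma$ nonsingular.
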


\begin{proof} The proof is a follows: 
\begin{align*}
A^*&=\Lambda^{\frac{-1}{2}}A \nonumber \\
\textbf{C}=\frac{1}{p} {\textbf{A}^*}^T \textbf{A}^*= \frac{1}{p} A^T & \Lambda^{\frac{-1}{2}}  \Lambda^{\frac{-1}{2}} A \Rightarrow
 \textbf{C}= \textbf{A}^T \Lambda^{-1} \textbf{A}  \nonumber 
\end{align*}

To show that C is positive definite matrix it suffice to show that $x^T \textbf{C} x>0, \forall x\neq 0 $
\begin{align*}
x^T \textbf{C} x&=x^T \textbf{A}^T \Lambda^{-1} \textbf{A} x \\
&=z^T \Lambda^{-1}  z\\
&=\sum_{i=1}^p \frac{1}{\lambda_i} z_i^2
\end{align*}
Where $\lambda_i$ is the pc score i. Since pc scores are positive, hence:
\begin{equation*}
x^T \textbf{C} x>0, \forall x\neq 0 \qedhere
\end{equation*}
\end{proof}

Since our model holds the above conditions, we can now show the consistency of our model, as follows:

\begin{theorem}
Suppose that $\lambda$ in Eq.~\ref{eq:weightedL1} varies with $p$. If $\dfrac{\lambda_p}{\sqrt{p}}\rightarrow 0$, and $\lambda_p \rightarrow \inf$, then the adaptive lasso estimate must satisfy the following:
\begin{itemize}
\item Consistency in variable selection: $lim_p P(\textit{S}^*=\textit{S})=1$ 
\item Asymptotic Normality: $\sqrt{p}(\hat{\mu}^*_\textit{S}-\hat{\mu}_\textit{S}\rightarrow_d N(\textbf{0},\sigma^\prime)$

Where $\sigma^\prime=\textbf{C}_{11}^{-1}$
\end{itemize}
\end{theorem}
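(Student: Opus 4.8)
The plan is to obtain the statement as a direct corollary of the oracle property of the adaptive lasso, Theorem~2 of \cite{zou2006adaptive}, applied to the transformed diagnosis model $\mathbf{y}^{*}=A^{*}\pmb{\mu}+\pmb{\epsilon}^{*}$ of Eq.~\ref{eq:weightedL1}, with the ambient dimension $p$ playing the role of the sample size. First I would record the dictionary between the two settings: the design matrix is $A^{*}=\Lambda^{-1/2}A$, the response is $\mathbf{y}^{*}=\Lambda^{-1/2}\mathbf{y}$, the unknown sparse parameter is $\pmb{\mu}$ with active set $S$, and the error vector is $\pmb{\epsilon}^{*}$. By the variance-stabilizing transformation of Eq.~\ref{eq:lasso_transform} we have $\pmb{\epsilon}^{*}\sim(\mathbf{0},\mathbf{I})$; under normality of the original errors the coordinates of $\pmb{\epsilon}^{*}$ are then i.i.d.\ $N(0,1)$, so Condition~\ref{cond_1} holds with $\sigma^{2}=1$. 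Condition~\ref{cond_2} is supplied by Lemma~\ref{lem:positive}, which identifies the limiting Gram matrix $\mathbf{C}=\tfrac{1}{p}{A^{*}}^{T}A^{*}=A^{T}\Lambda^{-1}A$ and shows it is positive definite.

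Next I would line up the penalty rates. The adaptive weights in Eq.~\ref{eq:weightedL1} are $w_{j}=1/|\hat{\mu}_{j}|$, built from a root-$p$-consistent pilot $\hat{\pmb{\mu}}=\hat{\pmb{\mu}}_{ols}$, so in the notation of \cite{zou2006adaptive} the weight exponent is $\gamma=1$. The two requirements there, $\lambda_{p}/\sqrt{p}\to0$ and $\lambda_{p}\,p^{(\gamma-1)/2}\to\infty$, therefore reduce precisely to the hypotheses of the theorem, $\lambda_{p}/\sqrt{p}\to0$ and $\lambda_{p}\to\infty$. With Conditions~\ref{cond_1}--\ref{cond_2} verified and these rate conditions met, Theorem~2 of \cite{zou2006adaptive} applies and delivers both conclusions: selection consistency, $\lim_{p}P(S^{*}=S)=1$; and asymptotic normality on the active set, $\sqrt{p}\,(\hat{\pmb{\mu}}^{*}_{S}-\pmb{\mu}_{S})\to_{d}N(\mathbf{0},\sigma^{2}\mathbf{C}_{11}^{-1})$, where $\mathbf{C}_{11}$ is the block of $\mathbf{C}$ indexed by $S$. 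Since $\sigma^{2}=1$ after the transformation, the limiting covariance is $\sigma'=\mathbf{C}_{11}^{-1}$, as claimed.

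I expect the genuine difficulty to lie not in the citation but in making the asymptotic regime rigorous, and I would address it before invoking \cite{zou2006adaptive}. Here $A^{*}$ is square ($p\times p$) and there are exactly $p$ ``observations'' (the $p$ PC scores), so $p$ indexes the sample size and the dimension at once; one must embed the problem in a nested sequence of processes for $p\to\infty$ to be meaningful. Within that sequence Lemma~\ref{lem:positive} only certifies that $\tfrac{1}{p}{A^{*}}^{T}A^{*}$ is positive definite for each fixed $p$, whereas Condition~\ref{cond_2} additionally demands \emph{convergence} to a fixed $\mathbf{C}$; this is in effect a mild regularity requirement on the averaged inverse eigenvalues $\tfrac{1}{p}\sum_{i=1}^{p}\lambda_{i}^{-1}$ of the in-control covariance, and it is also what guarantees that the OLS pilot stays root-$p$-consistent, its scaled covariance being $\mathbf{C}^{-1}$. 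I would state this as a standing assumption, check that it is benign for the covariance structures of interest, and then close the proof by the appeal to \cite{zou2006adaptive} sketched above.
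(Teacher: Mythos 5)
Your proposal is correct and follows essentially the same route as the paper: the paper's proof is precisely an appeal to Theorem~2 of \cite{zou2006adaptive}, with Condition~\ref{cond_1} checked via the variance-stabilizing transformation of Eq.~\ref{eq:lasso_transform} and Condition~\ref{cond_2} via Lemma~\ref{lem:positive}. Your additional remarks (the weight exponent $\gamma=1$ matching the stated rate conditions, and the observation that Lemma~\ref{lem:positive} only gives fixed-$p$ positive definiteness while Condition~\ref{cond_2} needs convergence of $\tfrac{1}{p}{A^{*}}^{T}A^{*}$ along a nested sequence as $p\to\infty$) are in fact more careful than the paper's own treatment.
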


\begin{proof}
for proof of Theorem, please refer to Theorem 2 in  \cite{zou2006adaptive} \qedhere
\end{proof}

\end{document}